\def\eqref#1{equation~\ref{#1}}
\def\1{\bm{1}}
\DeclareMathAlphabet{\mathsfit}{\encodingdefault}{\sfdefault}{m}{sl}
\SetMathAlphabet{\mathsfit}{bold}{\encodingdefault}{\sfdefault}{bx}{n}
\definecolor{lightgreen}{rgb}{0.9, 0.9, 0.9}
\crefname{section}{§}{§§}
\Crefname{section}{§}{§§}
\newcommand\refsec[1]{Section~\hyperref[sec:#1]{\ref{sec:#1}}}
\newcommand\refsecs[2]{\hyperref[sec:#1]{§\ref{sec:#1}:~\textsc{#1}}, \hyperref[sec:#2]{§\ref{sec:#2}:~\textsc{#2}}}
\newcommand{\cmt}[1]{{#1}}
\title{Predicting Emergent Abilities with Infinite Resolution Evaluation}
\author{Shengding Hu$^1$, Xin Liu$^{2}$, Xu Han$^{1,3}$\footnotemark[1], Xinrong Zhang$^{1}$, Chaoqun He$^{1}$, Weilin Zhao$^{1}$, \\ \textbf{Yankai Lin$^{4}$}, \textbf{Ning Ding$^{1}$,} \textbf{Zebin Ou$^{5}$,} \textbf{Guoyang Zeng$^{6}$,} \textbf{Zhiyuan Liu$^{1}$\thanks{Corresponding Authors.} ,} \textbf{Maosong Sun$^{1}$\footnotemark[1]}  \\
$^1$Department of Computer Science and Technology, Tsinghua University \quad 
\\$^2$Beijing Language and Culture University.\\
$^3$Shanghai Artificial Intelligence Laboratory \\
$^4$Renmin University of China. \quad 
$^5$Zhihu Inc. \quad 
$^6$Modelbest Inc.\\
\texttt{hsd23@mails.tsinghua.edu.cn} \\
}
\begin{document}

\maketitle

\begin{abstract}
\looseness=-1 The scientific scale-up of large language models (LLMs) necessitates a comprehensive understanding of their scaling properties. However, the existing literature on the scaling properties only yields an incomplete answer: optimization loss decreases predictably as the model size increases, in line with established scaling law; yet no scaling law for task has been established and the task performances are far from predictable during scaling. Task performances typically show minor gains on small models until they improve dramatically once models exceed a size threshold, exemplifying the ``emergent abilities''. In this study, we discover that small models, although they exhibit minor performance, demonstrate critical and consistent task performance improvements that are not captured by conventional evaluation strategies due to insufficient measurement resolution. To measure such improvements, we introduce \textsc{PassUntil}, an evaluation strategy with theoretically infinite resolution, through massive sampling in the decoding phase. With \textsc{PassUntil}, we conduct a quantitative investigation into the scaling law of task performance. The investigation contains two parts. Firstly, a strict \textit{task scaling law} that is not conventionally known to exist, is identified, enhancing the predictability of task performances. Remarkably, we are able to predict the performance of the 2.4B model on code generation with merely 0.05\% deviation before training starts, which is the first systematic attempt to verify predictable scaling proposed by GPT-4's report~\citep{openai2023gpt4}. Secondly, underpinned by \textsc{PassUntil}, \cmt{we are able to study emergent abilities quantitatively. We identify a kind of \textbf{accelerated emergence} whose scaling curve cannot be fitted by standard scaling law function and has a increasing speed. We then examine two hypothesis and imply that the ``multiple circuits hypothesis'' might be responsible for the accelerated emergence.}
\end{abstract}

\vspace{-0.6cm}
\hspace{235pt}\parbox[b]{0.3\textwidth}
{
\epigraph{\textit{``See the world in a grain of sand''}}
}
\vspace{-1.2cm}

\section{Introduction}

\looseness=-1 Large Language Models (LLMs)~\citep{devlin2018bert, raffel2020exploring, brown2020language, chowdhery2022palm} have become a center of interest among AI researchers recently. These models, trained on expansive datasets and furnished with an enormous number of parameters, have demonstrated unparalleled proficiency across diverse domains, such as text generation~\citep{dubois2023alpacafarm}, code completion~\citep{chen2021evaluating, roziere2023code}, and academic test~\citep{hendrycks2020measuring}.

\looseness=-1 The impressive success of these LLMs depends heavily on scaling up the model parameters and pre-training data volume. It has been consistently observed that, when considering a continuum of models with nearly identical architectures, larger models coupled with increased pre-training corpora consistently yield diminished training loss. This observation has been mathematically formalized as the scaling law of loss~\citep{kaplan2020scaling, henighan2020scaling}, which states that the reducible loss achieved by the model in the log scale is linear to the model size in the log scale. Scaling law has provided guidance for the scientific scaling of LLMs, including determining the balance of the model size and pre-training data size~\citep{hoffmann2022training, muennighoff2023scaling}. This has transformed what was once a somewhat blind scaling process into a methodology underpinned by empirical assurance. 
Nonetheless, such beneficial scaling law yield predictions solely on the loss, not extending to the real task performance encountered in practice. This divergence establishes a substantial gap in a comprehensive scaling-up methodology~\citep{ganguli2022predictability}.

\vspace{-0.2cm}
\begin{figure}[!htbp]
        \centering
        \includegraphics[width=\linewidth]{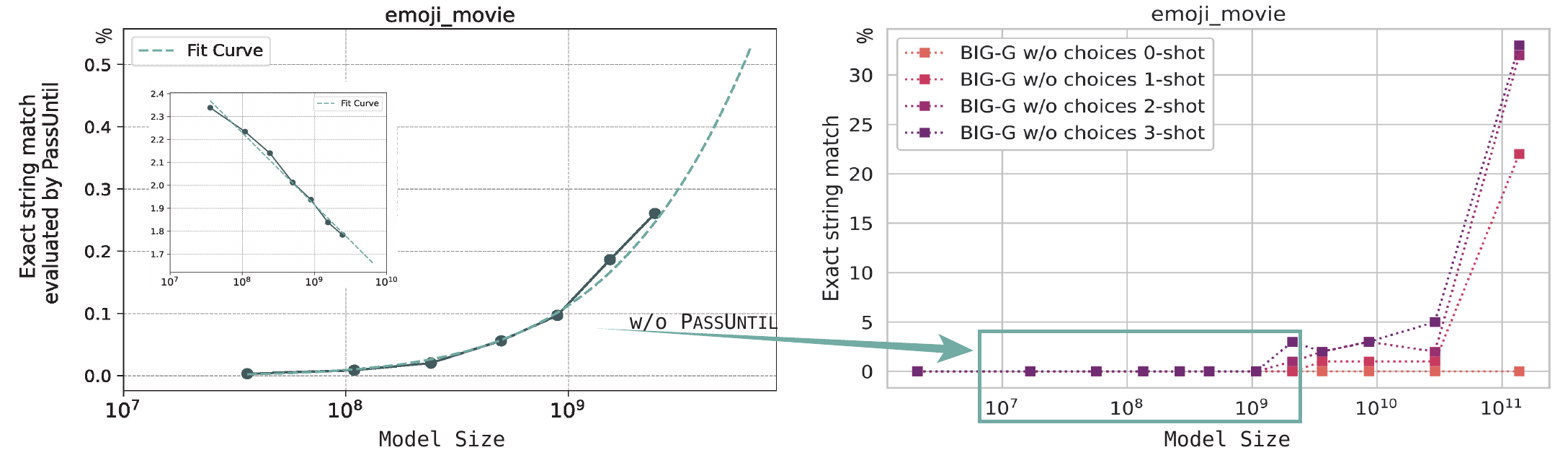}
     \vspace{-0.8cm}
    \caption{We can discriminate subtle performance improvement (left), which is evaluated as all zeros in conventional methods (right). The right figure directly uses Figure 9(a) in~\cite{sorscher2022beyond} as a comparison, which the authors utilize to illustrate a ``break-through'' behavior in task performance. The internal figure inside the left figure shows the performances in a $\log(-\log(\cdot))$ space, which displays strong linearity, supporting the task scaling law (Eq.(\ref{eq:task_scaling_raw})).}
    \label{fig:themefig}
\end{figure}

The challenge in extending loss caling law to task performance predominantly stems from the \textit{discontinuity} observed in task performance during scaling. Language models below a certain size yield trivial performance, i.e., random guessing on multiple choices or zero scores on generation tasks. However, when the model size surpasses a certain threshold, a distinct surge in performance appears, which leads to substantially non-trivial performance.  This phenomenon is summarized as the ``emergent abilities''~\citep{srivastava2022beyond, wei2022emergent}, and is observed across various model families and tasks. It seems that qualitative changes happen inside the model, which makes the model start to manifest unique capabilities.
While these emerging phenomenon indicate that LLMs are becoming stronger, they complicate the prediction on task performance.

\looseness=-1 A pivotal question arises: \textbf{can we unlock predictable scaling of the task performance, from the apparent discontinuities?} We hypothesize that the perceived discontinuity from trivial to excellent performance might stem from limited evaluation resolution\footnote{By ``resolution'', we view evaluation as a measurement of the real probability of completing a task. And resolution is the smallest probability difference that the evaluation strategy can detect.}. By employing a more nuanced resolution, one could potentially uncover the scaling law for tasks. \cmt{The most related work to ours is \cite{schaeffer2023emergent}, which proposes two methodology to make emergent abilities continuous, i.e., ``change of metrics'' and ``increase resolution'' by expanding test set size. Our motivation diverges from the ``change of metric'' approach of \cite{schaeffer2023emergent}, which posits that employing other continuous metrics can cause emergent abilities to disappear. A limitation of alternative smooth metrics (e.g., distribution distance) is they yield insufficient insights into the target metrics (e.g., exact match) that evaluators intuitively perceive. In contrast, our method extends the ``increase resolution'' approach in a novel way, which target directly at predicting the performance such as  code generation in our experiments.}

\looseness=-1  We introduce an evaluation strategy named \textsc{PassUntil} that, for the first time, enables quantitative exploration of the scaling properties of task performance. \textsc{PassUntil} deploys extensive random sampling in the decoding phase (e.g., $10^5$ sampling times), and evaluates each sampling result \textit{until} any generation \textit{passes} the target test. Therefore, this evaluation strategy has infinite measurement resolution as long as computational resources are not bounded. Moreover, it can provide maximum likelihood estimates of target metrics such as accuracy and exact match. To refine our evaluation resolution and accuracy, we suggest fitting to instance-level scaling law since different test instances might have different speeds of performance improvement during scaling.

With the proposed evaluation strategy, we delve into the scaling law governing task performance. 
To begin with, we train two series of models ranging from 0.03B to 2.4B. These models strictly adhere to pre-training loss scaling law, providing a solid foundation for analyzing task performance scaling behavior. We mainly disclose two findings in our exploration.

\looseness=-1 Firstly, task performances are predictable with \textsc{PassUntil}. We validate the presence of subtle but non-negligible performance in smaller models that can be captured by \textsc{PassUntil}. These performances are on the order of $10^{-5}$ and exhibit steady enhancement as the model scales up. Subsequently, we derive the mathematical form of \textbf{task scaling law}, experimentally verifying an almost strict linear relationship between \(\log(-\log(\textsc{PU}))\) and \(\log(N)\), where $\textsc{PU}$ denotes the estimation of target metric given by \textsc{PassUntil} and $N$ is the number of model parameters. 
This relationship enables us to attain highly accurate predictions. For instance, in the code generation task, our predictions exhibit a mere 0.05\% deviation from the actual values.

\cmt{Secondly, we discover a phenomenon of \textbf{accelerated emergence}. To begin with, we discover that the shape of the task scaling curve is not uniform across tasks. Several task manifest scaling functions that diverge from the typical task scaling law. In other words, their scaling curve is smooth and incremental but can not be fitted by the typical scaling law function. Their scaling curve of \(\log(-\log(\textsc{PU}))\) w.r.t. \(\log(N)\) is concave,  which is akin to an acceleration in the performance scaling speed. We provide a mathematical definition of such phenomenon. With the quantitative definition,} we exclude a possible  multi-step reasoning explanation~\citep{schaeffer2023emergent}, and propose an alternative hypothesis. This hypothesis is predicated on potential transformer circuits~\citep{elhage2021mathematical} that are used to explain the ``grokking'' phenomenon~\citep{power2022grokking, varma2023explaining}. It is in harmony with the observed scaling function.

Our work represents the first open-source attempt regarding the predictability of task performance. While GPT-4's report~\citep{openai2023gpt4} has initiated this exploration, it has not provided comprehensive details. We will open-source all checkpoints to facilitate future research in this direction.

\vspace{-0.3cm}
\section{Related Work}

Predicting task performance before training is an aspirational objective for the development of predictable AI systems, and a multitude of studies approach this aim from various perspectives.

\textbf{Loss Scaling Law.} Scaling phenomena have been observed across a broad spectrum of deep learning architectures. The power-law scaling behavior of loss in RNN-based models is investigated in~\citet{hestness2017deep}. ~\citet{kaplan2020scaling} delineate the loss scaling trends for Transformer-based language models and explores the scaling behavior of optimal hyper-parameters. They formally established the following scaling law
\begin{equation}
\label{eq:loss_scaling_law}
    L = c N^{-\alpha} + L_0,
\end{equation}
where $N$ is the number of non-embedding parameters of LLM, $c, \alpha$ are positive coefficients, and $L_0$ is the irreducible loss representing the randomness in data. This formulation has catalyzed the proliferation of LLMs. Subsequently, scaling laws are established for various domains and scenarios, including multi-modality~\citep{henighan2020scaling, zhai2022scaling}, computation constraint scenario~\citep{hoffmann2022training}, data engineering~\citep{muennighoff2023scaling,sorscher2022beyond}, and reinforcement learning~\citep{gao2023scaling}. ~\cite{yao2023research} extend the scaling law into loss prediction by introducing hyper-parameter scaling methods. The relationship of our work with these existing literature is twofold. First, these works concentrate on training and validation loss metrics, which do not reliably predict task performance. Second, our research builds on these scaling laws and extends the mathematical form of Eq.(\ref{eq:loss_scaling_law}) to the scaling law of task performance.

\looseness=-1 \textbf{Scaling Behavior of Task Performance.} 
Despite the predictable decrement in LLM loss, task performance improvements are twisted during scaling. While some tasks, predominantly those relying on memorization of knowledge, have shown progressive improvement, numerous tasks exhibit breakthrough behavior as model size increases~\citep{srivastava2022beyond,wei2022emergent}.  ~\citet{wei2022emergent} illustrate that the concept of ``emergence'' is also pertinent to prompting techniques such as Chain-of-Thought~\citep{wei2022chain} and In-context Learning~\citep{brown2020language}, complicating the pursuit of understanding the scaling law of task performance. It appears that the law of loss scaling offers no assurance for task performance, engendering a lack of guidance in pre-training methodology.  
Fortunately, several studies endeavor to demystify these emergent abilities. GPT-4's technical report~\citep{openai2023gpt4} reports that GPT-4's task performance can be predicted with less than $1/10000$ of computation, albeit without disclosing the methodology and acknowledging that certain abilities are still beyond prediction. Subsequent research~\citep{schaeffer2023emergent} attributes emergence to two reasons. The first one is non-smooth metrics. We disagree with it since the alternative metrics could not explain the sudden increase in target metrics such as exact match, which are of paramount interest to us. We align with their second attribution to improve resolution by adding more test samples. Different from their method, we propose a practical method to improve resolution without the need of adding test samples. Our work is also the first open-source attempt to quantitatively investigate the scaling behavior of task performance, proposing task scaling law and accelerated emergence phenomenon.

\vspace{-0.2cm}
\section{Pilot Experiments on Increasing Random Sample Numbers}
\label{sec:pilot}
We initiate our exploration by visualizing the effect of improving evaluation resolution on open-sourced models.
We choose four small models and evaluate them on two subsets of BigBench task~\citep{srivastava2022beyond}: Emoji Movie and Date Understanding (see \hypertarget{back:c_4_2}{Appendix}~\ref{app:emoji_movies} \hypertarget{back:c_4_3}{and}~\ref{app:dateunderstanding} for the subsets). We employ beam search and random sampling (with three sample times: 1, 100, and 10,000) during decoding. If any sampled answer of a test instance is evaluated as correct, then the instance is marked as ``passed''. We present the number of passed instances in Figure~\ref{tab:opensourcemodel_rs}.

\begin{figure}[htbp]
\centering
\scalebox{0.95}{
    \includegraphics[width=0.95\textwidth]{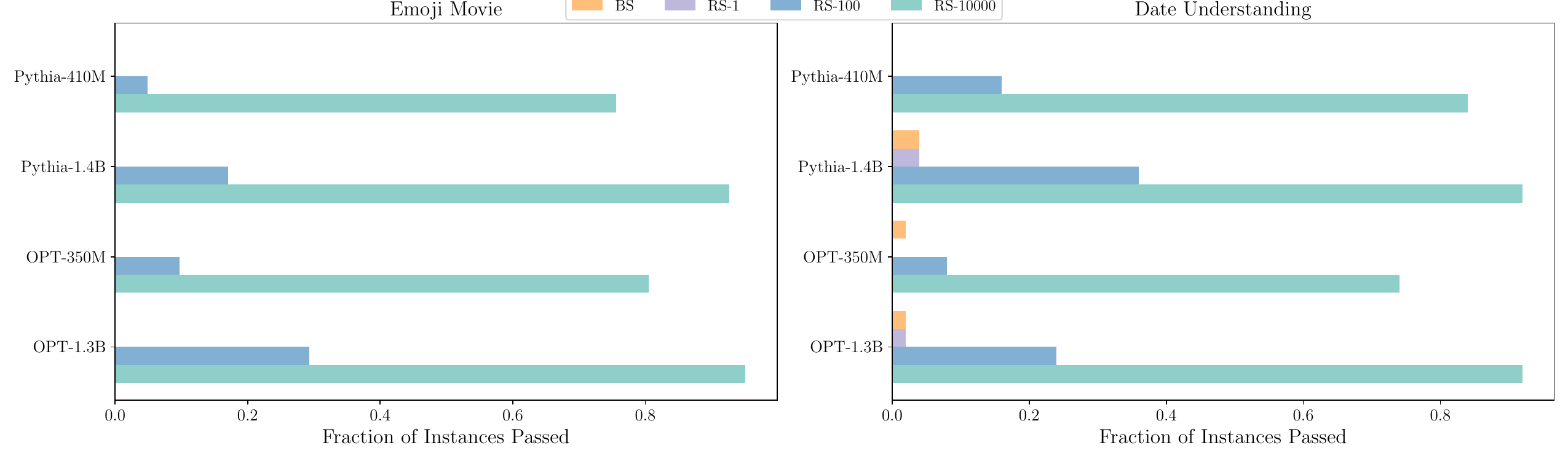}
}
    \caption{\cmt{BS denotes beam search, RS-$K$ denotes random sampling $K$ times.}}
    \label{tab:opensourcemodel_rs}
\vspace{-0.3cm}
\end{figure}

We can see that even for such tasks presenting substantial difficulty to small models, most instances are passable with enough random sampling times, which will contribute to the subtle task performance improvement. Inspired by this observation, we propose our evaluation strategy that centered around improving the resolution of evaluation.

\section{Methods}
 In this section, we describe our methods to increase the resolution of evaluation, which empowers the investigation of the scaling behavior of task performance. The first is an evaluation strategy \textsc{PassUntil}, and the second is an instance-level scaling curve fit. We also derive the task scaling law based on the loss scaling law.

\subsection{Infinite Resolution with \textsc{PassUntil}}
We view task performance evaluation as the measurement of the probability of a model passing~\footnote{The definition of ``pass'' does not need to be generating exactly the ground truth answer. For example, suppose we predict model's performance on AlpacaEval~\citep{alpaca_eval}, we can define ``pass'' as the model generation being better than GPT-4, judged by GPT-4. Therefore the ``pass'' has broad application.} a task.  Given a task instance $s$, suppose the probability that a model pass it is $P(s)$, our job is to estimate $\mathbb{E}_s[P(s)]$. Randomly sampling a fixed time $K$ could estimate $P(s)$. However, it is hard to define the budget $K$ that is both acceptable in computation and has enough resolution for hard samples that have small $P(s)$. We propose \textsc{PassUntil}, which performs an evaluation right after an answer is generated and determines whether it is passed before we sample the next generation. We stop sampling until $r$ (a constant) samples have passed the evaluation and record the sampling number $K$. We name the estimate of $P(s)$ as the \textsc{PassUntil} score \textsc{PU}, \cmt{which is defined as
\begin{equation}
\label{equ:pudef}
    \textsc{PU} = \frac{r}{K}
\end{equation}
Theoretically, $\textsc{PU}$ has the capability to measure success rates that are infinitesimally small. The \textsc{PassUntil} has the following properties.}

\newtheorem{theorem}{Theorem}
\cmt{
\begin{theorem}
$\textsc{PU}$ is a maximum likelihood estimate for $P(s)$.
\end{theorem}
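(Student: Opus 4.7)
The plan is to recognize the sampling procedure as a negative binomial experiment and then carry out the standard maximum likelihood derivation. Concretely, \textsc{PassUntil} draws i.i.d.\ Bernoulli($p$) trials with $p = P(s)$, and stops at the random time $K$ at which the $r$-th success occurs. Hence $K$ follows a negative binomial distribution, with probability mass function
\begin{equation*}
\Pr(K = k \mid p) \;=\; \binom{k-1}{r-1}\, p^{r}\,(1-p)^{k-r}, \qquad k \geq r.
\end{equation*}

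First I would write down the log-likelihood of a single observation $K = k$, namely
\begin{equation*}
\ell(p) \;=\; \log\binom{k-1}{r-1} \,+\, r\log p \,+\, (k-r)\log(1-p),
\end{equation*}
and note that the binomial coefficient does not depend on $p$, so it can be ignored for optimization. Next I would differentiate with respect to $p$ and set the derivative to zero:
\begin{equation*}
\frac{d\ell}{dp} \;=\; \frac{r}{p} - \frac{k-r}{1-p} \;=\; 0,
\end{equation*}
which rearranges to $r(1-p) = (k-r)p$, i.e.\ $r = kp$, giving the critical point $\hat p = r/k$. Finally I would check that this is indeed a maximum by computing the second derivative $-r/p^2 - (k-r)/(1-p)^2 < 0$ for $p \in (0,1)$, or equivalently by noting that $\ell$ is strictly concave on $(0,1)$. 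Substituting the observed stopping time $K$ yields the claimed estimator $\textsc{PU} = r/K$.

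There is no real obstacle here; the only subtlety worth flagging is the implicit modeling assumption that the draws are i.i.d.\ Bernoulli with a single success probability $P(s)$, so the stopping rule is a well-defined negative binomial experiment (as opposed to, say, a fixed-sample-size binomial one). A short remark before the computation clarifying that the rule ``sample until $r$ successes'' makes $K$ the sufficient statistic and that the likelihood factorization above applies, will make the proof self-contained in two or three lines.
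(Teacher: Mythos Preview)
Your proposal is correct and follows essentially the same approach as the paper: identify the stopping rule as a negative binomial experiment and invoke the MLE $r/K$. The paper parameterizes via the failure count $f=K-r$ and simply cites the MLE as a known fact, whereas you parameterize via the total trial count $K$ and spell out the standard first- and second-order computation; the substance is identical.
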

\vspace{-0.3cm}
\begin{proof}
The failure time $f = K - r$ follows the negative binomial distribution with success probability $P(s)$. $r/K$ is known to be an maximum likelihood estimate for $P(s)$. 
\end{proof}
}

 In practice, we set $r$ to as small as $1$ or $2$ considering the efficiency of evaluation.  We also set the upper bound of $K$ to a large number, such as $10^5$, to prevent endless sampling if we encounder an extremely low $P(s)$. Note that many instances stop before reaching this upper-bound. Next we discuss the necessity and limitations of \textsc{PassUntil}. 

\cmt{
\looseness=-1 \textbf{Necessity.} }Generally,  deriving $P(s)$ theoretically from the token probability on the ground truth solution is not feasible. This is due to two primary facts: firstly, there are likely to be multiple viable solutions; secondly, even though there is only one solution, there exist multiple decoding approaches besides the optimal tokenization to decode the solution\footnote{For example, [4513], [717,18], and [16,17,18] all decode into string ``123'' in GPT-4's tokenizer with vocab ``cl100k-base''.}.  

\cmt{
\looseness=-1 \textbf{Limitations.}} (1) Currently, our evaluation strategy is designed to be applicable when a random baseline achieves $P(s) = 0$. In the context of multiple-choice grade as the evaluation metric, evaluations tend to exhibit a biased high score relative to the true performance of the model (e.g., $P(s) = 0.25$ with random guess for four options). This random noise can overshadow the improvements made by smaller models. The exploration of scaling law for tasks with non-zero random baselines remains a subject for future research. 
(2) We currently only consider random sampling as a viable target decoding strategy due to its widespread use in LLMs. Using beam search as target decoding strategies and their relationship with random sampling poses an interesting avenue for future exploration and study.

\subsection{From Loss-Scaling Law to Task Scaling Law}
Then, we derive the task scaling law that \textsc{PassUntil} will follow. We assume that the test loss of generating the next token decreases according to the scaling law of Eq.(\ref{eq:loss_scaling_law}). 
\begin{equation}
\label{eq:task_scaling_raw}
    \textsc{PU} \sim \prod_{i=1}^{|y|} P(y_i|x_{1:|x|}, y_{1:i-1}) = \prod_{i=1}^{|y|} \exp(-{c_i}{{N}^{-\alpha_i}} - L_{0i}),
\end{equation}
where $x_{1:|x|}$ is the input sequence and $y_{1:|y|}$ is the most probable sequence that decodes the correct answer (assuming its dominance compared to other sequences).
Assume that the test sample is passable given a sufficiently potent LLM, then the irreducible loss for each token $L_{0i}$ approaches $0$. And assume the test loss of each token in the answer is decreasing with uniform speed when scaling (i.e., $a_i=a,\forall i$), we can derive the following function for $\textsc{PU} $ on task performance:
\begin{equation}
\label{eq:exp_scaling}
    \textsc{PU}(c, \alpha; N) \sim \operatorname{exp}(\sum_i -{c_i}{{N}^{-\alpha}} ) = \operatorname{exp}(-{c}{{N}^{-\alpha}})
\end{equation}
where $c=\sum_i c_i$. The resulting mathematical model is similar to that in GPT-4 technical report~\citep{openai2023gpt4} and Equation (4) in ~\cite{schaeffer2023emergent}. 

\subsection{Fitting Strategy}
\textbf{Dataset-level Fit.} When fitting the parameters  $c, \alpha$ in \textsc{PU}, a dataset-level fit is plausible. For the $j$-th model in the scaling curve, the individual test sample's \textsc{PU} is first averaged over the test set to procure $\operatorname{log}(-\operatorname{log}(\textsc{PU}(N_j))$, followed by a linear regression to $\operatorname{log}N_j$.

\textbf{Instance-level Fit.}
\label{sec:instance_level_pass}
We notice that differences between instances lead to different scaling behaviors, which means a dataset-level fit might not be accurate when the difficulty in the test set is diverse. For example, \textsc{PU} on easy
questions get saturated to 1 on a small model while the hard questions still receive trivial performance (see \hypertarget{back:a_2}{Appendix}~\ref{app:instance_level_fit_illustration} for illustration). We propose to fit an individual \textsc{PassUntil} score (\textsc{IPU}) for each question and aggregate them into an estimate for the whole dataset. 
\begin{equation}
    {\textsc{PU}}(\{c_s, a_s\}; N) = \frac{1}{|S|}\sum_s {\operatorname{IPU}}(c_s, a_s; N)
\end{equation}
\section{Predictable Scaling Experiments}

In this section, we demonstrate how the proposed framework works in practice. We first pre-train two series of language models ranging from $0.03$B to $2.4$B using two dataset mixtures. We predict the performance of the $2.4$B model based on the performance of the rest of the models in the series. 

\subsection{Scaling Configurations.}
\label{sec:train}
\textbf{Model Configurations.} We propose to keep a consistent ``shape'' of the Transformers while expanding their sizes. For the $i$-th model in the scaling curve, we set the number of layers to be $4i$, the number of attention heads to be $\lfloor \frac{i(8+i)}{4} \rfloor$, and the dimension of head to be $64$. This results in the hidden state's dimension $d_m$ being $d_hn_h$. We set the dimension of the feed-forward layer to be $2.5d_m$. The specific values are listed in the model configurations in Table~\ref{tab:model_configs} of Appendix~\ref{app:modelconf}.
The architecture is similar to LLaMA~\citep{touvron2023llama} (see \hypertarget{back:c_1}{Appendix}~\ref{app:modelconf} for details).

\textbf{Pre-training Corpora.} For series 1, we use the StarCoder dataset~\citep{li2023starcoder} as our pre-training data. For series 2, we use a mixture of StarCoder and Pile~\citep{gao2020pile} dataset. Leveraging the optimal compute LLMs~\citep{hoffmann2022training},  we set the maximum pre-training tokens for each model size to be the $20N$, where $N$ is the number of non-embedding parameters of the model. The detailed portion within the data mixture can be seen in  \hypertarget{back:c_2}{Appendix}~\ref{app:datamixture}. 

\vspace{-0.3cm}
\begin{figure}[!htbp]
        \centering
        \includegraphics[width=0.85\linewidth]{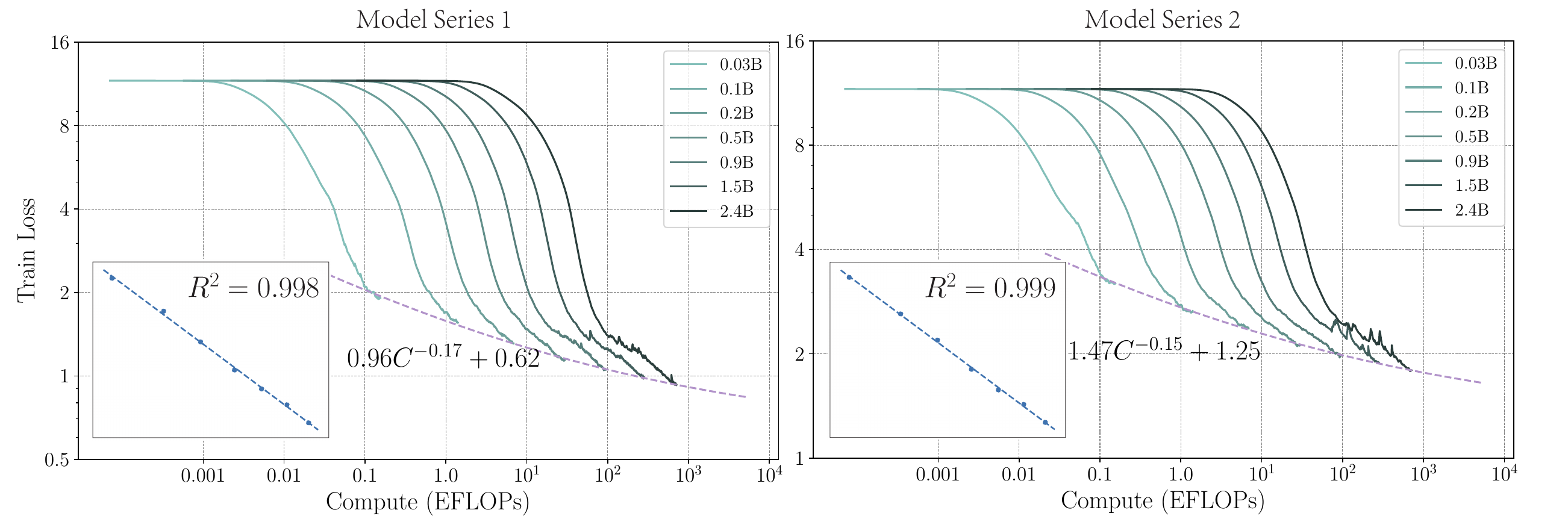}
        \label{fig:train_loss_2series}
\vspace{-0.3cm}
\caption{Training loss of the two series of models trained on different data mixtures. The internal figure illustrates the end-step reducible loss relative to model size, represented in logarithmic scale.}
    \label{fig:loss_scaling}
\end{figure}

\textbf{Hyper-parameters.} 
\hypertarget{back:c_hyper} Hyper-parameters are also of paramount importance in training a series of models that scale successfully. We examine the cosine learning rate scheduler, aligning our approach with that of \cite{hoffmann2022training}, and determine the critical batch size in accordance with \cite{kaplan2020scaling}. Nonetheless, due to constraints in space, we move the details to Appendix~\ref{app:hyperparameters}.

\subsection{Loss Scaling Law Verification.}
\looseness=-1 We present the training loss curves for models in Figure~\ref{fig:loss_scaling}. It is evident that the end-step training losses decrease in line with the scaling law. These empirically observed loss scaling laws lay a foundation for the subsequent approximation of task performance. Note that despite the occurrence of the loss spike in the 1.5B and 2.4B models, convergence to the scaling law is ultimately achieved, exemplifying the robustness of such an empirical law.

\subsection{Dataset-level Fit}
We select HumanEval~\citep{chen2021evaluating}, Emoji Movie, and Date Understanding~\citep{srivastava2022beyond} as the evaluation tasks.
Note that Emoji Movie is conventionally cited as representing “emergent abilities”~\citep{srivastava2022beyond} (see the right figure in Figure~\ref{fig:themefig}).
HumanEval is assessed using a zero-shot learning setting, while Emoji Movie and Date Understanding are evaluated employing 4-shot In-context Learning~\citep{brown2020language}. We additionally use Chain-of-Thought Reasoning~\citep{wei2022chain} for Emoji Movie. See \hypertarget{back:c_4}{Appendix}~\ref{app:testset} for the illustration and evaluation details of each task. We remove the distracting test instances from our evaluation list. For Emoji Movie, we remove the movie names that are common words (e.g., ``\textit{it}'') identified by NLTK~\citep{bird2009natural}. These common words make the exact string match susceptible to random guess's correctness (
See \hypertarget{back:d_5} Appendix~\ref{app:removing_distracting}  for details).

\begin{figure}[!htbp]
        \centering
        \includegraphics[width=0.87\linewidth]{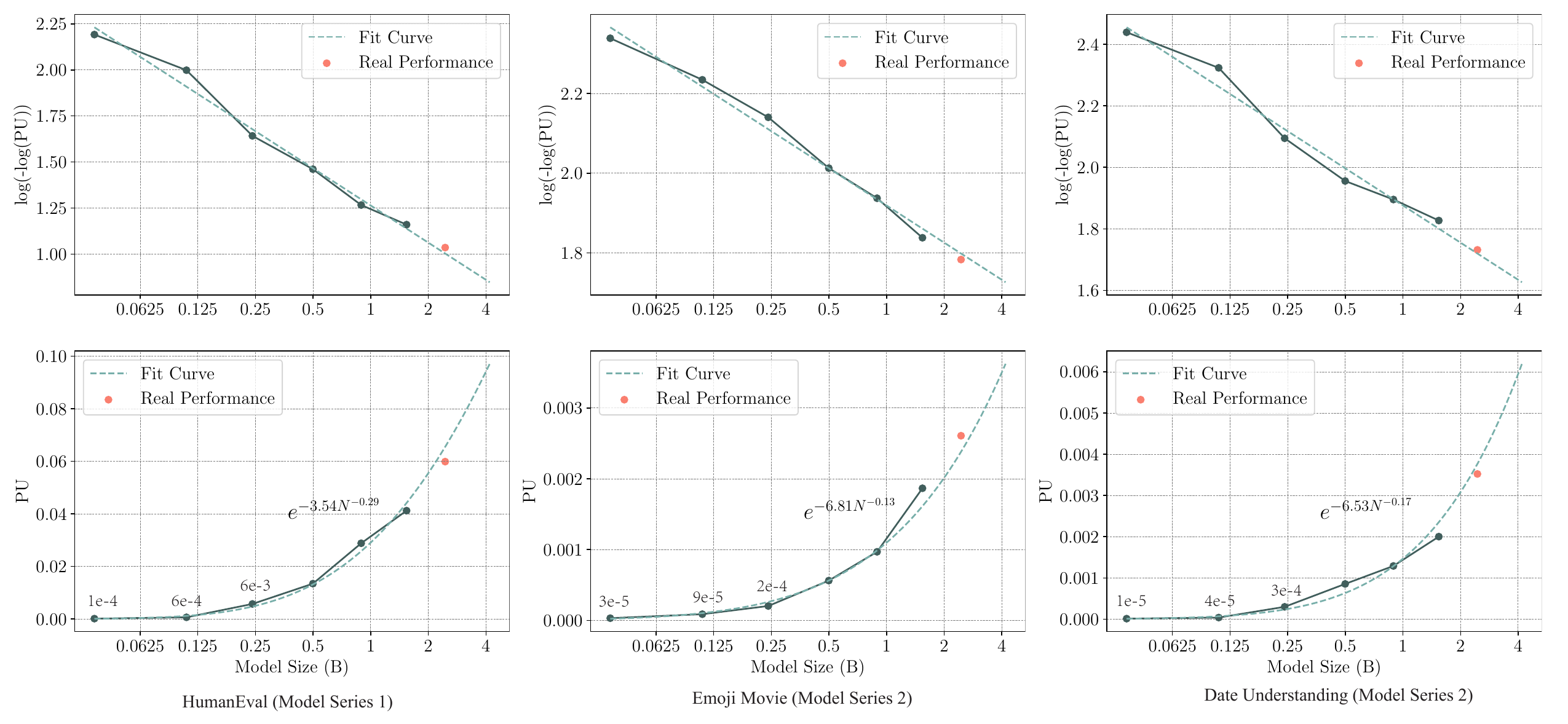}
     \vspace{-0.3cm}
    \caption{Task performance scales predictably with model scale. The {\color[rgb]{0.85,0.25,0.25}red} points denote the real performance of 2.4B model, which are close to the task scaling laws fitted from 0.03B to 1.5B.}
    \label{fig:dpu}
\vspace{-0.7cm}
\end{figure}

We observe that all three tasks exhibit a strong linear relationship between $\log(-\log(\textsc{PU}))$ and $\log(N)$, verifying the success of task scaling law given by Eq.(\ref{eq:task_scaling_raw}). The estimation of the scaling law functions utilizes the 0.03b to 1.5B models, which predicts the performance of the 2.4B model with small yet acceptable deviations. 

\vspace{-0.3cm}
\subsection{Instance-level Fit}
\vspace{-0.2cm}
According to \cref{sec:instance_level_pass}, we take the difference among test samples into consideration to improve the estimation. We plot how instance-level \textsc{PassUntil} scales in Figure~\ref{fig:idp_with_modelsize_part1} of \hypertarget{back:d_4}{Appendix}~\ref{app:otheripu_sample}. 
The fitted curves demonstrate that the performances of different instances not only originate from unique starting points but also scale at varying speeds.
Nevertheless, they can be fitted by task scaling law individually. Some instances deviate from the scaling law, which needs future investigation.

\begin{figure}[!htbp]
\vspace{-0.3cm}
    \centering
    \begin{minipage}[b]{1\textwidth}
     \centering
\scalebox{0.9}{
\begin{tabular}{lcccc}
    \toprule
  \textbf{Method}  & {\textbf{HumanEval (1)}} & {\textbf{HumanEval (2)}} & {\textbf{Date Understanding (2)}} & {\textbf{Emoji Movie (2)}} \\
    \midrule
    Real Value & 0.05990 & 0.04279 & 0.00346 & 0.002608 \\
    \hline
    Dataset-level Fit & 0.06550 & 0.05191 & 0.00377 & \textbf{0.002381}\\
    Instance-level Fit & \textbf{0.05987} & \textbf{0.04402} &\textbf{ 0.00352} & 0.003112\\
    \bottomrule
\end{tabular}
}
        \captionof{table}{Prediction of our framework compared to the real performance on two series of models. The number after the task denotes the model series used in the evaluation.}
    \label{tab:final_fit_result}
    \end{minipage}
    
    \begin{minipage}[b]{0.34\textwidth}
        \centering
        \includegraphics[scale=0.2]{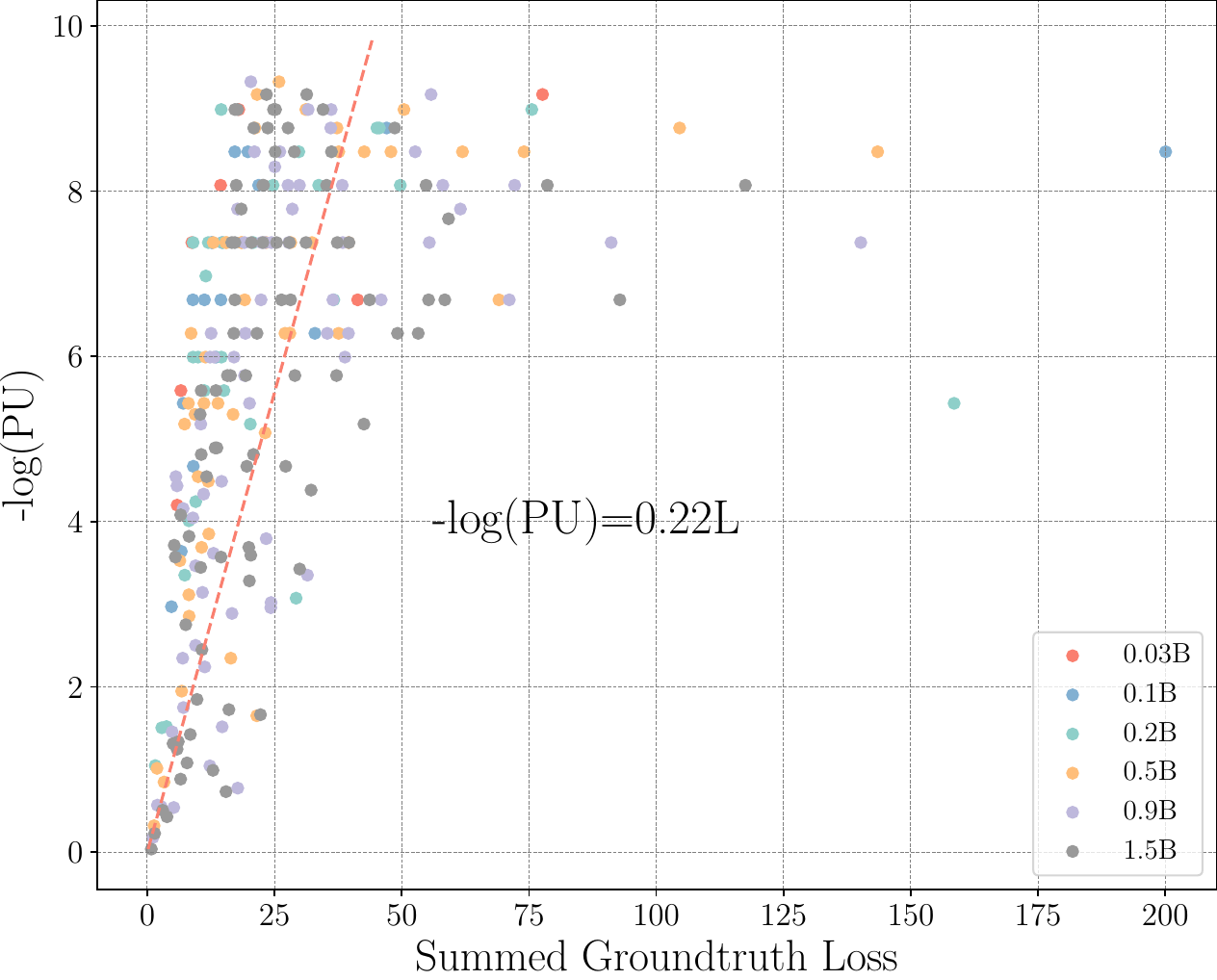}
        \caption{\textsc{PU} w.r.t. the test loss on HumanEval of model series 1.}
        \label{fig:nlpassuntil}
    \end{minipage}
    \hfill
    \vspace{0.1cm}
    \begin{minipage}[b]{0.64\textwidth}
        \centering
        \includegraphics[width=1.02\linewidth]{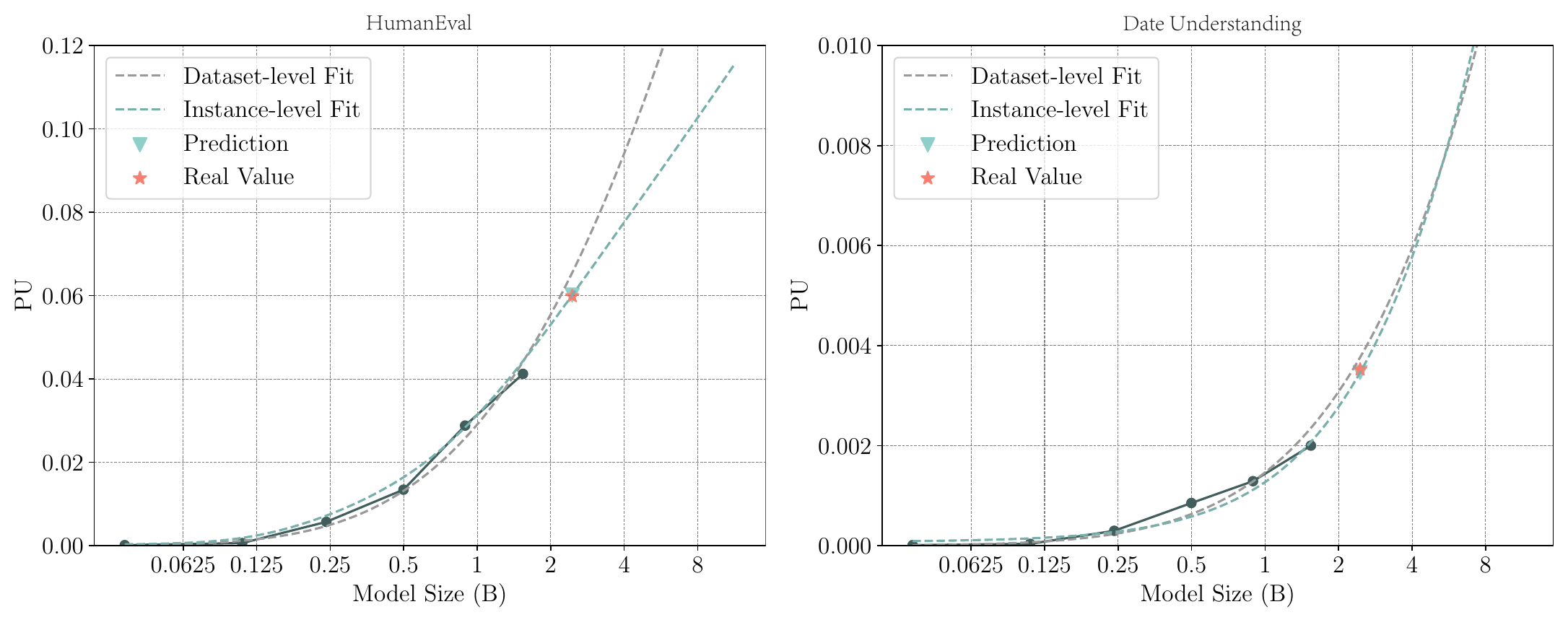}
        \vspace{-0.35cm}
        \caption{We successfully predicted the performance of 2.4B model with 0.05\% deviation (left) and 1.7\% deviation (right).}
        \label{fig:final_curve_humaneval_series1} 
    \end{minipage}
\vspace{-0.5cm}
\end{figure}

\textbf{Estimating \textsc{PassUntil} from Test Loss.}
Estimating at the instance level presents challenges for hard instances that lack adequate non-zero \textsc{PU} values for fitting. These samples may also contribute to \textsc{PU} as the model size increases. We suggest leveraging test loss on ground truth answers to assist the prediction for such instances (\cmt{See Appendix~\ref{app:loss_discuss} for a detailed discussion of its validity}). We leverage the ``easy'' instances, which have both test loss and non-zero \textsc{PU} to estimate the relation between test loss and \textsc{PU} (Figure~\ref{fig:nlpassuntil}). Then we predict the test loss of each instance on 2.4B model based on 0.03B $\sim$ 1.5B models. Finally, we transform the predicted test loss to predicted  \textsc{PU} according to the aforementioned relationship. 
Details are presented in \hypertarget{back:d_2}{Appendix}~\ref{app:test_loss_assist}.  We provide the final prediction result of 2.4B model in Table~\ref{tab:final_fit_result}, and draw the predicted $\textsc{PU}$ curve in Figure~\ref{fig:final_curve_humaneval_series1}.  We can see that the predictions are accurate, with only 0.05\% difference on HumanEval of series 1 and 1.7\% difference on Date Understanding of series 2.

\vspace{-3mm}
\section{Quantitative Analysis of Emergence}
\label{sec:emergent}
\vspace{-3mm}
\cmt{Building on the discovery of the predictability of task performance, we proceed with our investigation into a quantitative analysis of scaling behavior of broader range of tasks. We prove that even with the refined resolution brought by \textsc{PassUntil} and predictability of other emergent abilities, there are still certain abilities hard to be predicted. We establish their mathematical definitions, and examine the possible explanations for such scaling behaviors.
}
\begin{wrapfigure}{r}{0.5\textwidth}
\vspace{-3mm}
    \centering
    \begin{minipage}{.49\textwidth}
        \centering
        \includegraphics[width=1\linewidth]{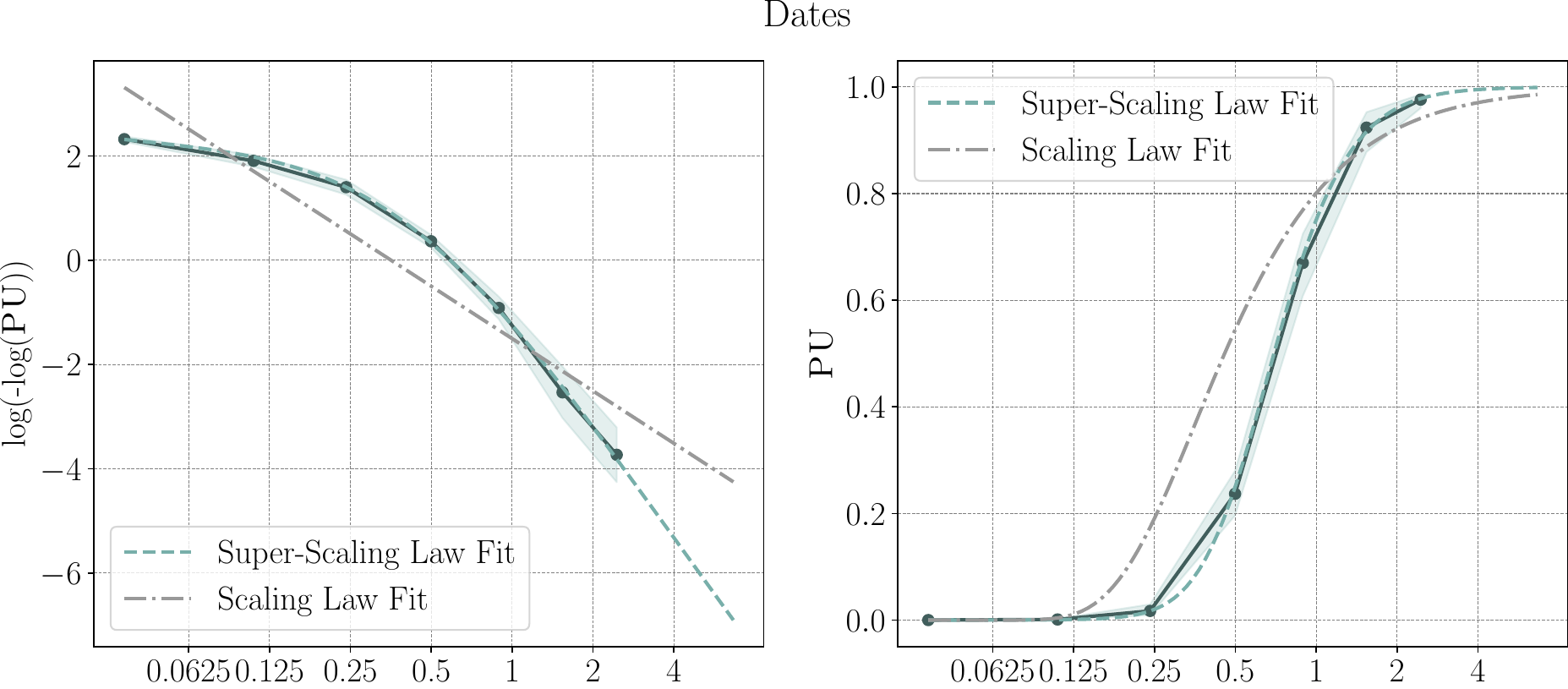}
    \end{minipage}
    \begin{minipage}{.49\textwidth}
        \centering
        \includegraphics[width=1\linewidth]{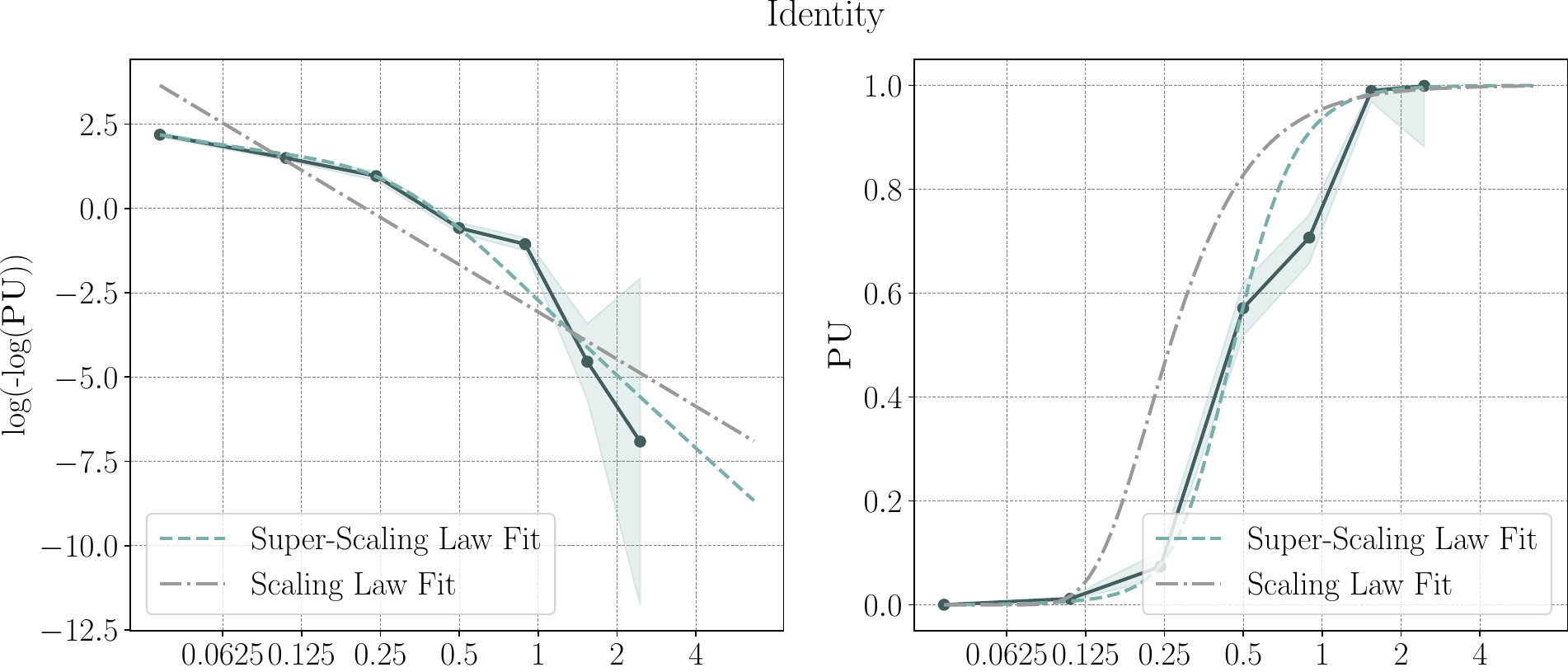}
        \caption{\cmt{Scaling curve for task ``Dates'' and ``Identity''.  Concave functions are observed between $\operatorname{log}(-\operatorname{log}(\textsc{PU}))$ and $\operatorname{log}N$. Scaling law fit curves are in {\color[rgb]{0.55, 0.6, 0.6}grey} and super-scaling law fit curves are in {\color[rgb]{0.3, 0.6, 0.45} green}. }}
        \label{fig:unnatural}
    \end{minipage}
    \hfill %
    \begin{minipage}{.49\textwidth}
    \centering
    \includegraphics[width=1\linewidth]{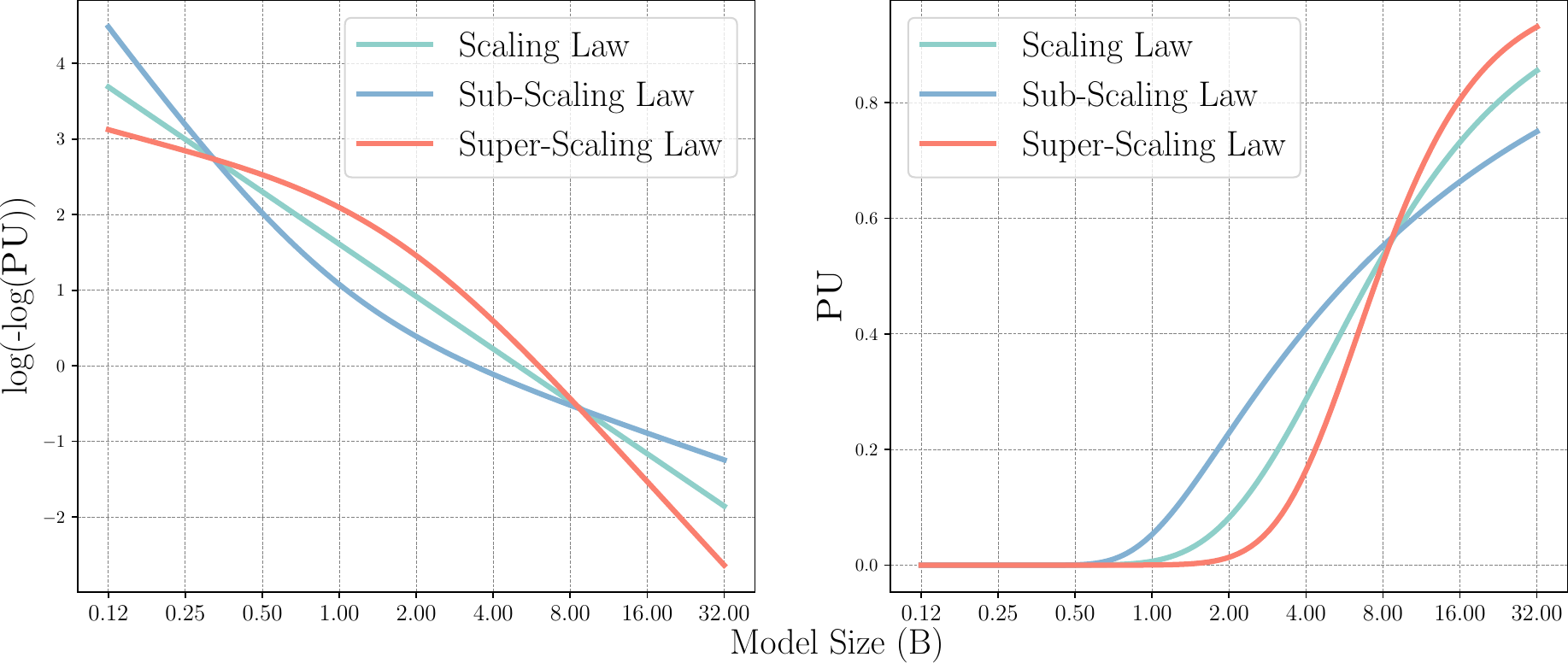}
    \caption{Three basic types of scaling curve, corresponding to convex, linear, and concave function between $\log(-\log(\textsc{PU}))$ and $\log N$.}
    \label{fig:threekindsofgrowth}
\end{minipage}%
\vspace{-2mm}
\end{wrapfigure}

We study the scaling curve on the ``Unnatural In-context Learning (UICL)'' categories in BigBench~\citep{srivastava2022beyond}. ``Unnatural In-context Learning" is a set of 8 tasks designed to specifically study the in-context learning ability. These tasks involve input-output pairs that have been intentionally altered to deviate from the typical training distribution, thereby necessitating the model's focus on unconventional in-context patterns. Task details and examples are in \hypertarget{back:c_4_4}{Appendix}~\ref{app:unantural_in_context_learning}. We randomly select 20 questions in the test set from each task and sample 4-shot examples from the remaining questions to serve as in-context examples. The evaluation metric employed is the exact match, and the upper bound sampling time is set to $10^5$. When fitting the scaling curve, we only utilize the dataset-level \textsc{PassUntil} since these test instances are manually constructed to test one skill of LLM and thus might be devoid of difficulty variation. Since our test set is small, we bootstrap 100 times from the 20 question's test result and use the bootstrapped to calculate the standard error of each \textsc{PassUntil} estimate (shown in the green hue in the Figures).

\textbf{Categorization of Emergence.}  The evaluation on task ``Dates'' and ``Identity'' is shown in Figure~\ref{fig:unnatural}. Other tasks are shown in \hypertarget{back:d_3}{Appendix}~\ref{app:unnaturalincontextscalingcurve}. ``Dates'' exhibit very smooth and consistent improvement starting from 0.03B, while the other tasks are a bit twisty. Nevertheless, 5/8 of these in-context learning tasks display a strictly concave function between $\log(-\log(\textsc{PU}))$ and $\log N$. The others (3/8) miss 1 or 2 valid estimation points due to their extreme difficulty for 0.03B and 0.1B models, since 0 \textsc{PassUntil} is obverseved even with $10^5$ sampling time, which we left for future exploration. The 5/8 tasks deviates from the scaling law (Eq.(\ref{eq:task_scaling_raw})) which requires this function to be linear. This means, unlike those tasks governed by the task scaling law,  where ``growth speed'' $\alpha$ is uniform across different model sizes, \textbf{there exist some tasks that see an increase in ``growth speed'' $\alpha$ as models enlarge. This phenomenon exemplifies an accelerated emergence phenomenon. } To provide concrete discussion of accelerated emergence, we provide our categorization of task scaling curves first.

\textbf{Mathematical Definition of Emergence.} Since the loss scaling law of Eq.(\ref{eq:loss_scaling_law}) is the only widely accepted principle during model scaling, we rely on its derived task scaling law of Eq.(\ref{eq:task_scaling_raw}) as a separator between emergence and other scaling behavior.

\newtheorem{definition}{Definition}
\begin{definition}
Given a spectrum of models, we let the number of non-embedding parameters be variable $N$, suppose the  $\textsc{PU}(N)$ estimated by \textsc{PassUntil} on a task is a continuous function of $N$. Define $F(N) = \operatorname{log}(-\operatorname{log}(\textsc{PU}(N)))$, \cmt{then the scaling curve of a task can be categorized into three basic main categories~\footnote{if $F(N)$ has both convex and concave parts, then we can call it mixed growth. }:}
\begin{enumerate}
    \item if $F(N)$ is a linear function of $\operatorname{log}N$, then the task obeys scaling law growth.
    \item if $F(N)$ is a convex function of $\operatorname{log}N$, then the task obeys sub-scaling law growth.
    \item if $F(N)$ is a concave function of $\operatorname{log}N$, then the task obeys super-scaling law growth, {or ``accelerated emergence''. }
\end{enumerate}
\end{definition}

 Figure~\ref{fig:threekindsofgrowth} shows visualizations of three types of growth. Qualitatively, the scaling curves of all three types appear analogous to exponential growth when performance starts to become noticeable. However, they are qualitatively different. Task scaling curves with task scaling law growth or sub-scaling law growth are easier to predict and control, whereas accelerated emergence is not easy to predict, which might go out of control when the model gets larger. 

\cmt{\textbf{Cause of Shape of Scaling Curve.} The above mathematical definition provides us the opportunity to examine the hypothesis regarding the genesis of these scaling behavior. Here, we first study the following hypothesis: Emergent abilities may be induced by multi-step reasoning~\citep{srivastava2022beyond, wei2022emergent, schaeffer2023emergent}. }
   
We prove that, surprisingly, \textbf{``multi-step reasoning'' leads to sub-scaling law growth}.

\begin{theorem}
Suppose each reasoning step's success rate, measured by \textsc{PassUntil} obeys the scaling law growth, then the multi-step success rate follows the sub-scaling law growth.
\end{theorem}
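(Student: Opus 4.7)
The plan is to reduce the claim to the fact that log-sum-exp of affine functions is convex. First I would formalize ``multi-step reasoning'' in the natural way: if a task decomposes into $k$ independent subtasks with \textsc{PassUntil} scores $\textsc{PU}_1(N), \ldots, \textsc{PU}_k(N)$, then passing the composite task requires passing every subtask, so
\begin{equation*}
\textsc{PU}(N) \;=\; \prod_{i=1}^{k} \textsc{PU}_i(N).
\end{equation*}
By hypothesis each subtask obeys scaling law growth, so by Eq.~(\ref{eq:exp_scaling}) there are constants $c_i, \alpha_i > 0$ with $\textsc{PU}_i(N) = \exp(-c_i N^{-\alpha_i})$, equivalently $\log(-\log \textsc{PU}_i(N)) = \log c_i - \alpha_i \log N$ is affine in $\log N$.

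Next I would take logarithms of the product and introduce the change of variable $u = \log N$ to rewrite the composite quantity as
\begin{equation*}
-\log \textsc{PU}(N) \;=\; \sum_{i=1}^{k} c_i N^{-\alpha_i} \;=\; \sum_{i=1}^{k} \exp\bigl(\log c_i - \alpha_i u\bigr).
\end{equation*}
Applying $\log$ once more gives
\begin{equation*}
F(N) \;=\; \log(-\log \textsc{PU}(N)) \;=\; \operatorname{lse}\bigl(\log c_1 - \alpha_1 u,\, \ldots,\, \log c_k - \alpha_k u\bigr),
\end{equation*}
where $\operatorname{lse}(v_1,\ldots,v_k) = \log\sum_i e^{v_i}$ is the log-sum-exp function.

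The final step is to invoke two standard facts from convex analysis: (i) $\operatorname{lse}$ is convex on $\mathbb{R}^k$, and (ii) composition with an affine map preserves convexity. Since each argument $\log c_i - \alpha_i u$ is affine in $u = \log N$, the function $F$ is convex in $\log N$, which by the definition of sub-scaling law growth is exactly the conclusion. I would also note that the inequality is strict whenever the $\alpha_i$ are not all equal, via the well-known strict convexity of $\operatorname{lse}$ on non-parallel affine arguments; if all $\alpha_i$ coincide, the expression collapses to $(\sum_i c_i) N^{-\alpha}$ and we recover a linear $F$, i.e., the boundary case of scaling law growth.

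The main obstacle is not analytic but conceptual: pinning down what ``multi-step reasoning'' means so that the product decomposition is legitimate. The cleanest assumption is conditional independence of step successes given the context, so that the joint pass probability factors. Any mild dependence (e.g.\ later steps being easier after earlier ones succeed) only tightens the inequality in the sub-scaling direction, so the qualitative conclusion is robust; I would remark on this to pre-empt objections, but leave the formal statement under the independence assumption to match the derivation of Eq.~(\ref{eq:task_scaling_raw}).
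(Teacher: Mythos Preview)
Your proof is correct and arrives at the same expression $F(N)=\log\sum_i c_i\exp(-\alpha_i\log N)$ as the paper. The paper then proceeds by direct computation: it writes out the second derivative of $F$ with respect to $\log N$ and shows it is nonnegative via the Cauchy--Schwarz inequality, with equality precisely when all $\alpha_i$ coincide. You instead recognize $F$ as the log-sum-exp of affine functions of $u=\log N$ and invoke the standard facts that $\operatorname{lse}$ is convex and that convexity is preserved under affine precomposition. The two arguments are mathematically equivalent---the textbook proof of $\operatorname{lse}$ convexity is exactly the Cauchy--Schwarz calculation the paper performs---so this is a repackaging rather than a genuinely new idea. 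Your version is cleaner and makes the structure transparent for anyone familiar with convex analysis; the paper's explicit derivative computation is self-contained and surfaces the equality condition ($\alpha_i$ all equal) without needing to appeal to strict-convexity properties of $\operatorname{lse}$. Your closing remarks on the independence assumption and robustness to mild dependence are sensible context but go beyond what the paper addresses.
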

\vspace{-0.4cm}
\begin{proof}
Suppose the success rate of reasoning step $i$ obeys a scaling law growth with coefficient $c_i$ and $\alpha_i$, then $
 F(N) = \operatorname{log}\left(\sum_i c_i \operatorname{exp}\left(-\alpha_i \operatorname{log} N \right)\right) $.
Using Cauchy–Schwarz inequality, we can prove that $
\frac{\partial^2{F}}{\partial{(\operatorname{log}N )^2}} \geq 0 $. Therefore, the scaling curve is convex. See  \hypertarget{back:b_1}{Appendix} \ref{app:theorectical_analysis} for more.
\end{proof}

\vspace{-0.2cm}

This proof can also be understood more intuitively: the growth speed will initially be boosted by the improvement of those easy steps, and eventually be bounded by the most difficult steps, thus showing a decreasing growth speed. Then, we propose an alternative hypothesis: suggesting that multiple neural ``circuits''~\citep{elhage2021mathematical} may be represented within the LLMs, and that as long as one such circuit can successfully solve the test instance, the test instance is deemed passed.  This hypothesis is inspired by the explanation of ``grokking'' phenomenon given by~\cite{varma2023explaining}. They propose that there exists a memorization circuit and a generalization circuit inside the transformers, and the ``grokking'' phenomenon is led by the generalization circuit getting more efficient than the memorization circuit during training. We will demonstrate that with this hypothesis, the scaling curve exhibits characteristics of emergence.

\begin{theorem}
Suppose multiple circuits $i$ exist in the LLMs that are responsible for solving the task, and each displays scaling law growth and has \textsc{PU}$_i$. \cmt{And suppose the success rate} of the task is the majority voting of these circuits, i.e., $
 F(N)  = \operatorname{log}\left(-\operatorname{log}\max_i \textsc{PU}_i\right)$.
Then, $F(N)$ is a concave function of $\operatorname{log}N$. 
\end{theorem}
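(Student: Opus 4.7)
The plan is to reduce $F(N)$ to something manifestly concave in $u := \log N$ by carefully unwinding the two nested logarithms around the maximum. By hypothesis, each circuit obeys scaling law growth, meaning there exist positive constants $c_i, \alpha_i$ with $\textsc{PU}_i(N) = \exp(-c_i N^{-\alpha_i})$, equivalently $-\log \textsc{PU}_i(N) = c_i N^{-\alpha_i}$. The first step is to turn the maximum over circuits into a minimum: since $\log$ is monotone, $\max_i \textsc{PU}_i$ corresponds to $\min_i(-\log \textsc{PU}_i)$, so
\begin{equation*}
-\log \max_i \textsc{PU}_i(N) \;=\; \min_i \bigl(-\log \textsc{PU}_i(N)\bigr) \;=\; \min_i c_i N^{-\alpha_i}.
\end{equation*}

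Next I would take the outer logarithm and substitute $u = \log N$. Using $c_i N^{-\alpha_i} = \exp(\log c_i - \alpha_i u)$ and the fact that $\log$ commutes with $\min$ of positive quantities, I get
\begin{equation*}
F(N) \;=\; \log\Bigl(\min_i \exp(\log c_i - \alpha_i u)\Bigr) \;=\; \min_i \bigl(\log c_i - \alpha_i u\bigr).
\end{equation*}
Thus, as a function of $u = \log N$, $F$ is the pointwise minimum of a finite family of affine functions $u \mapsto \log c_i - \alpha_i u$.

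To finish, I would invoke the standard convex-analysis fact that the pointwise minimum of any collection of affine functions is concave (each affine function is concave, and the infimum of concave functions is concave). This immediately gives concavity of $F$ in $\log N$, which is the desired super-scaling-law (accelerated emergence) behavior. There is essentially no technical obstacle; the only subtlety worth flagging in the write-up is the sign-flip step that turns $\max$ into $\min$, since this is precisely what distinguishes the multi-circuit ``one-success-suffices'' aggregation from the multi-step product aggregation in Theorem~2 (which instead produced a sum of convex exponentials and hence a convex $F$). I would also remark briefly that the argument is robust: if the majority-voting rule is replaced by any aggregation dominated from below by $\max_i \textsc{PU}_i$, the same concavity bound applies on the corresponding transformed scale, and one can further note that in the regime where a single circuit dominates, the slope of $F$ tracks that circuit's exponent $-\alpha_i$, so transitions between dominant circuits produce the visible kinks that drive the apparent acceleration.
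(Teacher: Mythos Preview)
Your proposal is correct and follows essentially the same approach as the paper: both reduce $F(N)$ to $\min_i(\log c_i - \alpha_i \log N)$ by pushing the outer $\log$ through the $\min$ (after flipping $\max$ to $\min$ via the inner $-\log$), and then appeal to the fact that a pointwise minimum of affine functions is concave. Your write-up is somewhat more explicit about the sign-flip step and adds helpful commentary contrasting with Theorem~2, but the mathematical content is identical.
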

\vspace{-0.3cm}
\begin{proof}
    $F(N) =  \min_i (\operatorname{log} c_i-\alpha_i \operatorname{log} N)$. Since the minimum operator keeps concavity, $F(N)$ is a concave function of $\operatorname{log}N$. See Appendix \ref{app:theorectical_analysis} for a more elaborated proof.
\end{proof}

We loosely test the hypothesis by fitting the scaling curve for the UICL task. In practice, similar to~\cite{varma2023explaining}, we adopt a soft version of the majority voting.  We apply a weighted combination between two circuits. And we assume the number of the circuits is 2. Therefore, we fit ${w_1}({\alpha_1}\log N-\log{c_1}) + {w_2}({\alpha_2}\log N - \log c_2)$ to $F(N)$, where $w_1$ and $w_2$ is given by the Softmax of ${\alpha_i}\log N -\log {c_i}$. The resulting fit curve is demonstrated in the {\color[rgb]{0.3, 0.6, 0.45}green} line in Figure~\ref{fig:unnatural} and Appendix~\ref{app:unnaturalincontextscalingcurve}. We can see that 
this hypothesis produces fit curves that align more accurately with the observed performance scaling curve.

\vspace{-2mm}
\section{Conclusion.}
Our work introduces a novel evaluation strategy capable of detecting minimal performance improvements during model scaling, thus opening avenues for quantitatively measuring the task scaling laws and the emergence abilities. This method has enabled the successful prediction of the task performance of larger models. Additionally, we have performed a quantitative analysis of emergent abilities, providing a clearer insight into their nature and origination. This research not only enhances our understanding of LLMs’ scaling properties but also sets the stage for future explorations in scientific scale-up of LLMs.


\section*{Ethical Statement}
In this paper, we demonstrate that although we can predict a set of emergent abilities, the accelerated emergence remains hard to be predicted. The hypothesis regarding the cause of accelerated emergence implies that we need a better understanding of the working mechanism to produce accurate predictions for such emergent ability. Without an understanding of the working mechanism, any fit curve to the early stage of task performance improvement might be governed by another stronger, yet unknown, ``generalization'' circuit when the model gets sufficiently large. Thus, this hypothesis calls for deeper research into the mechanism of LLMs to prevent the safety concerns brought by accelerated emergent abilities.

\section*{Reproducibility Statement}
We will open-source and all evaluation scripts for reference.

\section*{Acknowledgements}
This work is supported by the National Key R\&D Program of China (No.2022ZD0160501).



\bibliography{iclr2023_conference}
\bibliographystyle{iclr2023_conference}

\clearpage

\appendix

\textbf{Note: clicking each {\faHandPointerO} in the appendix will allow you to jump back to the corresponding position in the main paper to continue reading. }

\section{Discussion}
\subsection{Limitations}
Our work has several limitations. 
\cmt{
\begin{enumerate}
    \item \textbf{Scale Limitation.} Firstly, we  currently do not extend the prediction of task performance to much larger models (e.g., 10B and more). We will try to scale up the experiment in the future.
    \item \textbf{Scope Limitation.} Secondly, we are not claiming that we can accurately predict the task performance on all tasks. For example, we only \textit{fit} the scaling curve for the tasks that display emergence. We still have a long way to go before we can \textit{predict} these tasks. Even for the tasks that might not display ``emergence'', we currently do not complete a thorough prediction for them. We will add predictions on more of these tasks in the future. That said, predictable scaling, as OpenAI points out~\citep{openai2023gpt4}, is still a very challenging and aspirational goal for AI researchers. Our work serves as the initial attempt to it. 
    \item \textbf{Explanation Limitation.} Thirdly, although we propose a hypothesis regarding the cause of accelerated emergence, our validation for the hypothesis is superficial. We satisfactorily fit the scaling curve under this hypothesis. However, whether this hypothesis is true from the underlying mechanism remains unknown.
\end{enumerate} 
}

\subsection{Discuss of the Use of Loss as an Assistance Metric}
\label{app:loss_discuss}

\cmt{In our experiments of Individual \textsc{PassUntil}, we use loss on ground truth as an assistance to \textsc{PassUntil}, which may raise a misunderstanding: why don't you directly use loss to predict the performance? We provide a detailed illustration below. }

\begin{enumerate}
    \item \cmt{It's important to distinguish between ``loss is not predictive of task performance'' and ``loss can help predict task performance.'' The former suggests that loss is a not sufficient statistic for estimating task performance without other measurement, while the latter indicates that loss is one of useful factors in improving prediction accuracy. In our paper, we clearly verify both statements. Without utilizing the PassUntil method, one cannot deduce actual performance (accuracy) solely from loss values. For example, a loss of 1.0 does not directly translate to an accuracy of 0.2 for a task. And actual performance must be empirically measured. Furthermore, as shown in Figure~\ref{fig:nlpassuntil}, the loss of an individual sample does not have a one-to-one correlation with PassUntil results, much less with discrete accuracy.}
    \item \cmt{However, loss does provide useful information. Once we measure PassUntil across a large sample set, we can establish a statistical relationship between loss and PassUntil (not possible if we only rely on loss data). This relationship can enhance our prediction accuracy.}
    \item \cmt{The incorporation of loss for improved predictions is driven by practical considerations, such as limited computational resources, rather than being a necessity. Figure~\ref{fig:dpu} demonstrates that even without loss data, we can accurately predict task performance. Imagine a scenario where we can measure every sample with sufficient resolution to ensure each is passed at least once; in such a case, loss data would not be necessary.}
\end{enumerate}

\section{Supplementary Materials for \textsc{PassUntil}}
In this section, we provide some additional comments about our evaluation strategy.
We present our intuition for instance-level \textsc{PassUntil}.
\subsection{Instance-level \textsc{PassUntil} Intuition.}
\label{app:instance_level_fit_illustration}
\hyperlink{back:a_2}{\faHandPointerO}
Table \ref{tab:easyhardinstances} delineates the \textsc{PassUntil} for both an easy and a challenging instance within HumanEval. It was observed that with an increase in model size, the easier instance (index 24) exhibited a higher PU. However, the more challenging instance (index 20) continued to manifest trivial performance, suggesting a potential variance in their respective scaling curves. Blindly averaging performance over instances will make the improvement on hard instances vanish compared to the easy ones, leading to an inaccurate prediction after the model gets saturated in the easy instances.
\begin{table}[h]
    \centering
    \begin{tabular}{c|cccccc}
    \toprule
        \multirow{2}{*}{\textbf{Instance index}}  & \multicolumn{6}{c}{\textsc{PassUntil}}   \\
        \cline{2-7}
        & \textbf{0.03B} & \textbf{0.1B} & \textbf{0.2B} & \textbf{0.5B} & \textbf{0.9B} & \textbf{1.5B} \\
        \midrule
        20 & 0 & 0 & 0 & 0.000625 & 0.001875 & 0.008125 \\
        24 & 0.00375 & 0.05125 & 0.350625 & 0.3625 & 0.568125 & 0.796875 \\
         \bottomrule
    \end{tabular}
    \caption{In HumanEval,  an easy instance (index 24) gets a much higher $\textsc{PU}$ compared to the hard one (index 20).}
    \label{tab:easyhardinstances}
\vspace{-0.3cm}
\end{table}

\section{Supplementary Materials on Emergent Abilities}

\subsection{Theoretical Analysis of Hypothesis}
\label{app:theorectical_analysis}
\hyperlink{back:b_1}{\faHandPointerO}
We present the proof of two theorems about the cause of emergent abilities in Section~\ref{sec:emergent} briefly. In this section, we provide the elaborated proofs.

\setcounter{theorem}{1}

\begin{theorem}
Suppose the success rate of each reasoning step $i$, measured by \textsc{PassUntil}, obeys the scaling law growth. Then the multi-step's success rate follows the sub-scaling law growth.
\end{theorem}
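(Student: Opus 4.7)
The plan is to translate the hypothesis into a log-sum-exp form and then establish convexity by a direct second-derivative computation. Using the derivation of the task scaling law in Eq.~(\ref{eq:task_scaling_raw}), the assumption that each step $i$ obeys scaling law growth means $\textsc{PU}_i(N) = \exp(-c_i N^{-\alpha_i})$ for positive coefficients $c_i, \alpha_i$. Assuming the multi-step success probability factorizes as the product $\prod_i \textsc{PU}_i$ (exactly the multiplicative composition already underlying Eq.~(\ref{eq:task_scaling_raw})), I would take two logarithms and set $u = \log N$ to obtain $F(u) = \log\bigl(\sum_i c_i e^{-\alpha_i u}\bigr)$. Sub-scaling law growth is then precisely the statement that $F$ is convex in $u$, so the entire claim reduces to showing $F''(u) \geq 0$.

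The main computation is to let $G(u) = \sum_i c_i e^{-\alpha_i u}$ so that $F = \log G$, and then apply the quotient rule to get $F'' = (G''\,G - (G')^2)/G^2$. Direct differentiation yields $G'(u) = -\sum_i \alpha_i c_i e^{-\alpha_i u}$ and $G''(u) = \sum_i \alpha_i^2 c_i e^{-\alpha_i u}$. Writing $g_i(u) = c_i e^{-\alpha_i u} > 0$, the required inequality $G''\,G \geq (G')^2$ becomes
\[
\bigl(\textstyle\sum_i \alpha_i^2\, g_i\bigr)\bigl(\textstyle\sum_i g_i\bigr) \;\geq\; \bigl(\textstyle\sum_i \alpha_i\, g_i\bigr)^2,
\]
which is exactly the Cauchy--Schwarz inequality applied to the vectors with components $\alpha_i\sqrt{g_i}$ and $\sqrt{g_i}$. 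Hence $F''(u) \geq 0$ everywhere, so $F$ is convex in $\log N$ and the multi-step success rate exhibits sub-scaling law growth in the sense of the definition in Section~\ref{sec:emergent}.

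I do not anticipate a serious analytical obstacle here: the argument reduces to the standard fact that the log-sum-exp of linear functions is convex. The one step that genuinely deserves care is the bookkeeping that turns ``each step obeys the scaling law'' into a single exponential with additive exponent $\sum_i c_i N^{-\alpha_i}$; this relies on multiplicative composition of step-wise success probabilities (equivalently, additive composition of token-level log-probabilities), and I would state it explicitly at the outset to avoid ambiguity. It is also worth remarking when equality holds in Cauchy--Schwarz: precisely when all $\alpha_i$ are equal, in which case $F$ is affine in $\log N$ and the multi-step rate collapses back to ordinary scaling law growth --- matching the intuition that genuine sub-scaling behavior requires a nontrivial mixture of step difficulties.
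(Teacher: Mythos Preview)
Your proposal is correct and follows essentially the same route as the paper's own proof: both reduce the statement to showing that $F(u)=\log\bigl(\sum_i c_i e^{-\alpha_i u}\bigr)$ is convex in $u=\log N$, compute the second derivative via the quotient rule, and verify nonnegativity by Cauchy--Schwarz applied to the vectors $(\alpha_i\sqrt{g_i})$ and $(\sqrt{g_i})$, with equality precisely when all $\alpha_i$ coincide. Your write-up is in fact slightly more explicit about the multiplicative composition assumption and the equality case, but there is no substantive difference in approach.
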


\begin{proof}
Suppose the \textsc{PU} of reasoning step $i$ obeys a scaling law growth with coefficient $c_i$ and $\alpha_i$, The overall success rate is 
\begin{equation}
\begin{split}
 F(N) & = \operatorname{log}\left(-\operatorname{log}\prod_i P_i\right)\\ & = \operatorname{log}\left(-\operatorname{log} \prod_i\operatorname{exp}\left(- c_i \operatorname{exp}(-\alpha_i \operatorname{log} N)\right)\right) \\ & = \operatorname{log}\left(\sum_i c_i \operatorname{exp}\left(-\alpha_i \operatorname{log} N \right)\right)\\
\end{split}
\end{equation}
Then we take the second derivative of the $F(N)$ over $\log N$, we can get
\begin{equation}
\label{eq:app_th_1}
\begin{split}
    \frac{\partial^2{F}}{\partial{(\operatorname{log}N )^2}} & = \frac{\sum_i \alpha_i^2 c_i \exp(-\alpha_i \log N) \sum_i c_i \exp(-\alpha_i \log N)}{(\sum_i c_i\exp(-\alpha_i \log N))^2} \\
& - \frac{(\sum_i \alpha_i c_i \exp(-\alpha_i \log N) )^2}{(\sum_i c_i\exp(-\alpha_i \log N))^2}
\end{split}
\end{equation}
Let $k_i = c_i \exp(-\alpha_i \log N) > 0$, the Eq.(\ref{eq:app_th_1}) is
\begin{equation}
\frac{\sum_i \alpha_i^2 k_i\sum_i k_i - (\sum_i \alpha_i k_i)^2}{(\sum_i k_i)^2}
\end{equation}
Using Cauchy–Schwarz inequality, we can prove that
\begin{align}
\frac{\partial^2{F}}{\partial{(\operatorname{log}N )^2}} \geq 0, \quad \forall  \alpha_i > 0, c_i > 0
\end{align}
Only when $\alpha_i \sqrt{k_i} / \sqrt{k_i} = \operatorname{Constant}$, the equation holds, i.e., when all the steps in the reasoning chain scale with the same speed.
Thus, $F(N)$ is a convex function of $\operatorname{log}N$, and the scaling curve exhibits sub-scaling law growth. 
\end{proof}


\begin{theorem}
Suppose multiple circuits exist in the LLMs that are responsible for solving the task, each displays scaling law growth, the \textsc{PassUntil} of the task is the majority voting of these circuits, i.e., $
 F(N)  = \operatorname{log}\left(-\operatorname{log}\max_i P_i\right) $
Then, $F(N)$ is a concave function of $\operatorname{log}N$. 
\end{theorem}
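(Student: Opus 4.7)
The plan is to reduce the statement to an elementary fact about the pointwise minimum of affine functions. First, I would unpack what ``each circuit displays scaling law growth'' means in the notation of Theorem~2 and Eq.~(\ref{eq:task_scaling_raw}): for each circuit $i$ we have $\textsc{PU}_i = \exp(-c_i N^{-\alpha_i})$ with $c_i,\alpha_i>0$, equivalently $\log(-\log \textsc{PU}_i) = \log c_i - \alpha_i \log N$. This is the identity that turns the scaling-law assumption into something linear in $\log N$, which will be crucial.

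Next I would translate the ``majority vote'' hypothesis $\max_i \textsc{PU}_i$ into the $F$-coordinates. Since $-\log(\cdot)$ is monotone decreasing on $(0,1]$, taking the maximum of the $\textsc{PU}_i$'s corresponds to taking the minimum of $-\log \textsc{PU}_i$, i.e.\ the minimum of $c_i N^{-\alpha_i}$. Applying the outer $\log$ and using that $\log$ is monotone, this gives
\begin{equation*}
F(N) \;=\; \log\bigl(-\log \max_i \textsc{PU}_i\bigr) \;=\; \min_i\bigl(\log c_i - \alpha_i \log N\bigr).
\end{equation*}
Thus in the variable $u=\log N$, $F$ is the pointwise minimum of the affine functions $u \mapsto \log c_i - \alpha_i u$.

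Finally I would invoke the standard fact that the pointwise minimum of a family of concave functions is concave; in particular the pointwise minimum of affine functions is concave (indeed piecewise affine and concave). This yields concavity of $F$ as a function of $\log N$, which is exactly the super-scaling-law / accelerated-emergence regime in the definition given in Section~\ref{sec:emergent}. As an optional remark, I would note that the minimum is strictly concave at any ``kink'' where two circuits cross, which is the point at which one circuit overtakes another and drives the visible acceleration; this gives some geometric intuition for why the hypothesis predicts emergence rather than mere scaling-law behavior.

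There is no real obstacle here: the only subtle step is keeping track of the two monotonicity flips (the inner $-\log$ flipping $\max$ into $\min$, and the outer $\log$ preserving the $\min$), so the main thing I would be careful about in the write-up is stating those monotonicity facts cleanly before collapsing the expression to $\min_i(\log c_i - \alpha_i \log N)$.
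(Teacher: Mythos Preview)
Your proposal is correct and follows essentially the same route as the paper: reduce $F(N)$ to $\min_i(\log c_i - \alpha_i \log N)$ and then invoke that a pointwise minimum of affine (hence concave) functions is concave. If anything, your write-up is slightly more careful than the paper's, since you make explicit the two monotonicity flips that justify turning $\log(-\log\max_i \textsc{PU}_i)$ into a $\min$.
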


\begin{proof}
\begin{equation}
\label{eq:app_th_2}
\begin{split}
F(N) &= \operatorname{log}\left(-\operatorname{log}\max_i \operatorname{exp}\left(- c_i \operatorname{exp}(-\alpha_i \operatorname{log} N)\right)\right) \\
    &= \operatorname{log} \min_i  c_i \operatorname{exp}(-\alpha_i \operatorname{log} N) \\
&=  \min_i (\operatorname{log} c_i-\alpha_i \operatorname{log} N) \\
\end{split}
\end{equation}
 Since the minimum operator keeps concavity, $F(N)$ is a concave function of $\operatorname{log}N$. 
\end{proof}

\section{Details of Experimental Configurations}
In this section, we detail the model configurations, training configurations, and data mixtures used for the two series of models.

\subsection{Model Configuration}
\label{app:modelconf}
\hyperlink{back:c_1}{\faHandPointerO}
Table~\ref{tab:model_configs} shows the detailed model configurations and training configuration of the series models in the scaling curve, which aims to keep a uniform ``shape'' while expanding the model size. We use a similar architecture to Llama 2~\citep{touvron2023llama2}. Some minimal differences include: we use tied embedding between the input and output embeddings, and we use gated-GeLU~\citep{hendrycks2016gaussian} instead of gated-SiLU~\citep{DBLP:journals/corr/abs-2002-05202}.

\begin{table}[htbp]
    \centering
    \begin{tabular}{c|cccccccccc}
    \toprule
        Name & i & N (B)& $d_m$ & $d_{ff}$ &$d_h$ & $n_h$ & $L$ & BS (M) & TS & Tokens (B)\\
    \midrule
  $\backslash$ & i & $\backslash$ & $d_hn_h$ & $2.5d_{m}$ & 64 & $\lfloor \frac{i(8+i)}{4} \rfloor$ & $\lfloor4i\rfloor$ & $\backslash$ & $\backslash$ &$\backslash$\\
           0.03B & 3 &  0.036 & 512 & 1280 & 64 & 8 & 12 & 0.33 & 2196 & 0.72\\
          0.1B & 4 & 0.109 & 768 & 1920 & 64 & 12 & 16  & 0.88 & 2464 & 2.18\\
         0.2B& 5 & 0.241 & 1024 & 2560 & 64 & 16 & 20  & 1.57 & 3064 & 4.82\\
        0.5B& 6 & 0.499 & 1344 & 3360 & 64 & 21 & 24 & 2.10 & 4758 & 9.99\\
        0.9B & 7 & 0.892 & 1664 & 4160 & 64 & 26 & 28  & 2.95 & 6049 & 17.9  \\
        1.5B & 8 & 1.542 & 2048 & 5120 & 64 & 32 & 32 & 4.26 & 7230 & 30.8\\
       2.4B & 9 & 2.45 & 2432 & 6080 & 64 & 38 & 36 & 5.51 & 8900 & 49.0\\
    \bottomrule
    \end{tabular}
    \caption{Model configurations and training configurations of the models in the scaling curve. N(B) represents the number of non-embedding parameters of the model, measured in billions. BS(M) indicates the number of tokens in a batch (i.e., batch size) used to train the model, measured in millions. TS denotes the training steps. Tokens(B) refers the total number of tokens used to train the model.}
    \label{tab:model_configs}
\end{table}

\subsection{Pre-training Corpora}
\label{app:datamixture}
\hyperlink{back:c_2}{\faHandPointerO}
We pre-train two series of LLMs using different data mixtures to demonstrate the generality of our experiments.  Tables \ref{tab:series1_data_mixture} and \ref{tab:series2_data_mixture} respectively display the specific data mixture proportions for Series 1 and Series 2 LLMs.

\subsection{Hyper-parameters Study}
\label{app:hyperparameters}
\hyperlink{back:c_hyper}{\faHandPointerO}
\textbf{Learning Rate.} We use a cosine learning rate scheduler, analogous to those in preceding studies~\citep{touvron2023llama, touvron2023llama2, hoffmann2022training}. The maximum learning rate is consistently fixed at $0.01$ across varying model scales, with no significant loss explosion at this rate. This stability is potentially attributed to our normalization strategies~\citep{yang2022tensor} and increased batch size across scales. Echoing findings from \citet{hoffmann2022training}, we ascertain that for training LLMs up to a specific end step, the optimal cycle length of the cosine learning rate scheduler is equivalent to the end step. Deviations from this optimal cycle length, either longer or shorter, result in sub-optimal performance.

\textbf{Batch Size.} To estimate the optimal batch size required for model pre-training, we replicate the experiments in alignment with \citet{kaplan2020scaling} to determine the optimal batch size of a model and adjust the real batch size slightly from the optimal batch size to maximize GPU utility. The values of batch sizes and train steps are listed in Table~\ref{tab:model_configs}.

\subsection{Test Set Configurations}
\label{app:testset}
\hyperlink{back:c_4}{\faHandPointerO}
In this section, we introduce the test sets and evaluation details in our experiments.

\subsubsection{HumanEval}
\label{app:humaneval}
The HumanEval~\citep{chen2021evaluating} dataset released by OpenAI encompasses 164 programming problems. Each problem is composed of a function signature, a docstring, a body, and multiple unit tests. Our assessment of this dataset is conducted utilizing a zero-shot approach. The completion of code, as generated by LLMs, is deemed passed only if it successfully passes all unit tests. For our evaluations, we set the upper bound of sampling times in \textsc{PassUntil} to $10^4$.

\subsubsection{Emoji Movie}
\label{app:emoji_movies}
\hyperlink{back:c_4_2}{\faHandPointerO}
Emoji Movie is a subtask of BigBench~\citep{srivastava2022beyond} and requires LLMs to identify well-known movies from their plots described using emojis. Our evaluation methodology incorporates the use of Chain-of-Thought (CoT) and 4-shot In-context Learning. We randomly select 41 test instances (initially 50 instances, with 9 distracting instances removed, see Appendix~\ref{app:removing_distracting})  to constitute our test set and arbitrarily designate 4 instances as few-shot contexts. For CoT, we use GPT-4 to generate prompts for each instance in the few-shot context. The model is expected to read the 4-shot in-context examples,  generate a thought, and then provide the answer. Our evaluation methodology employs extract string match, i.e. where the output of the model contains the target film name. We set the sampling upper bound times set to be $10^5$.

\begin{wrapfigure}{r}{0.4\textwidth}
        \centering
        \begin{tabular}{c|c}
        \toprule
           Corpora   &  Token Portion \\
        \midrule
        StarCoder\_Python &  0.3\\
        StarCoder\_Others &  0.7\\
    \bottomrule
        \end{tabular}
        \captionof{table}{Pre-training corpora used for scaling the Code LLMs (model series 1).}
        \label{tab:series1_data_mixture}
    \centering
    \begin{tabular}{c|c}
    \toprule
       Corpora   &  Token Portion \\
    \midrule
    StarCoder\_Python &  0.15\\
    StarCoder\_Others &  0.12\\
    Stack\_Overflow & 0.03 \\
    Arxiv & 0.05 \\
    Pile & 0.65 \\
\bottomrule
    \end{tabular}
    \captionof{table}{Pre-training corpora used for scaling the Code-Text LLMs (model series 2).}
    \label{tab:series2_data_mixture}
\end{wrapfigure}

\subsubsection{Date Understanding}
\label{app:dateunderstanding}
\hyperlink{back:c_4_3}{\faHandPointerO}
Date Understanding, a subset of BigBench~\citep{srivastava2022beyond}, is constructed to evaluate the capability of LLMs in comprehending dates, by posing questions related to the date reasoning. 
For the evaluation of this task, we employ a 4-shot In-context Learning. We randomly sample 47 instances to form the test set (initially 50 instances, with 3 distracting instances removed, see Appendix~\ref{app:removing_distracting}). We random sample 4 instances from the remaining dataset to serve as in-context examples. We also use extract string match to measure the output from LLMs and set the sampling upper bound times to $10^5$.

\subsubsection{Unnatural In-context Learning Tasks}
\label{app:unantural_in_context_learning}
\hyperlink{back:c_4_4}{\faHandPointerO}
The Unnatural In-context Learning tasks serve as a series of distinctive subtasks within BigBench~\citep{srivastava2022beyond}. These subtasks are designed to assess the models’ ability to perform in-context learning where the context sequences are intentionally altered to be likely outside of the training distribution, necessitating the model’s attention to unconventional in-context patterns. Some instances of these subtasks are exemplified in Table \ref{tab:unnaturalincontextexample}.
For each task, 20 instances are randomly sampled to compose the test set, utilizing a 4-shot In-context Learning configuration. Four instances are randomly selected from the remaining dataset to provide context. We use extract string match to measure the output from LLMs and set the sampling upper bound times to $10^5$.

\begin{table}[!htbp]
    \centering
    \label{tab:unnaturalicl}
    \begin{tabular}{p{4cm}l}
        \toprule
        \textbf{Task Name}  & \textbf{Example} \\
        \midrule
        Dates  & Input: 2015-10-22\ Target: \textit{!10!22!2015!}\\
        Dates with Unnatural Form & Input: !08!24!1984!\ Target: \textit{1984-08-24}\\
        Dates with Unnatural Content & Input: 96980-49665-10674\ Target: \textit{!49665!10674!96980!}\\
        Dates with Unnatural Form and Content & Input: !49665!10674!96980! \ Target: \textit{96980-49665-10674}\\
        Identity & Input: a, b, c, d, e\ Target: \textit{a, b, c, d, e} \\
        Reverse Natural Content & Input: t, u, o, b, a\ Target: \textit{a, b, o, u, t} \\
        Reverse to Natural Content & Input: r, o, o, m, s\ Target: \textit{s, m, o, o, r}  \\
        2-digits & Input: 10 - 10 = \ Target: \textit{20}\\
        \bottomrule
    \end{tabular}
    \caption{Example Tasks in Unnatural In-context Learning Tasks}
    \label{tab:unnaturalincontextexample}
\end{table}

\subsection{Removing Distracting Factor is Important When Measuring Tiny Performance.}
\label{app:removing_distracting}\hyperlink{back:d_5}{\faHandPointerO}
We notice that removing the distracting factor is important when measuring the minor performance gain during scaling. The distracting factor means that a test instance is drastically different from the other test instance in terms of required abilities or evaluation bias. Note that we select the distracting factor based on the observation of test instances, which does not lead to information leakage when predicting the 2.4B model. 

For Emoji Movie, some of the movie names are common words, enabling even a modestly sized model to ``guess'' them correctly based on our assessment criteria: the determination of model correctness is contingent upon the presence of movie names within the model's output. Figure \ref{fig:emoji_common_words} shows that there is no significant association in the pass rates between models of varied scales. In other words, the scaling law does not have much of an impact on model performance for these problems. Consequently, it becomes essential to exclude such distracting factors from consideration. We remove the movie names that are common words identified by the popular toolkit NLTK~\footnote{\url{https://www.nltk.org/}}.

For Date Understanding, we omit the following instance shown in Table~\ref{tab:du_distracting}. These instances only require the model to extract the answer from the context and don't require reasoning about the date.

In GPT-4 report~\citep{openai2023gpt4}, they split the HumanEval dataset into separate bins with different difficulties and conducted scaling prediction for each bin, thus removing the distraction of easy examples to hard examples.

\begin{table}[!htbp]
\centering
\scalebox{0.9}{
    \begin{tabular}{p{13cm}}
        \toprule
      \textbf{Example} \\
        \midrule
         Today's meeting is rescheduled to 11 am tomorrow, 10/16/1924. What is the date tomorrow in MM/DD/YYYY? \\
         \hline
         Yesterday was 12/31/1929. Today could not be 12/32/1929 because December has only 31 days. What is the date yesterday in MM/DD/YYYY? \\
        \hline
        Today is 9/7. Jane is watching NFL 2003. What is the date today in MM/DD/YYYY? \\ 
    \bottomrule
    \end{tabular}
}
    \caption{Distracting instances in Date Understanding Tasks.}
    \label{tab:du_distracting}
\end{table}

\begin{figure}[h]
    \centering
\includegraphics[width=0.7\linewidth]{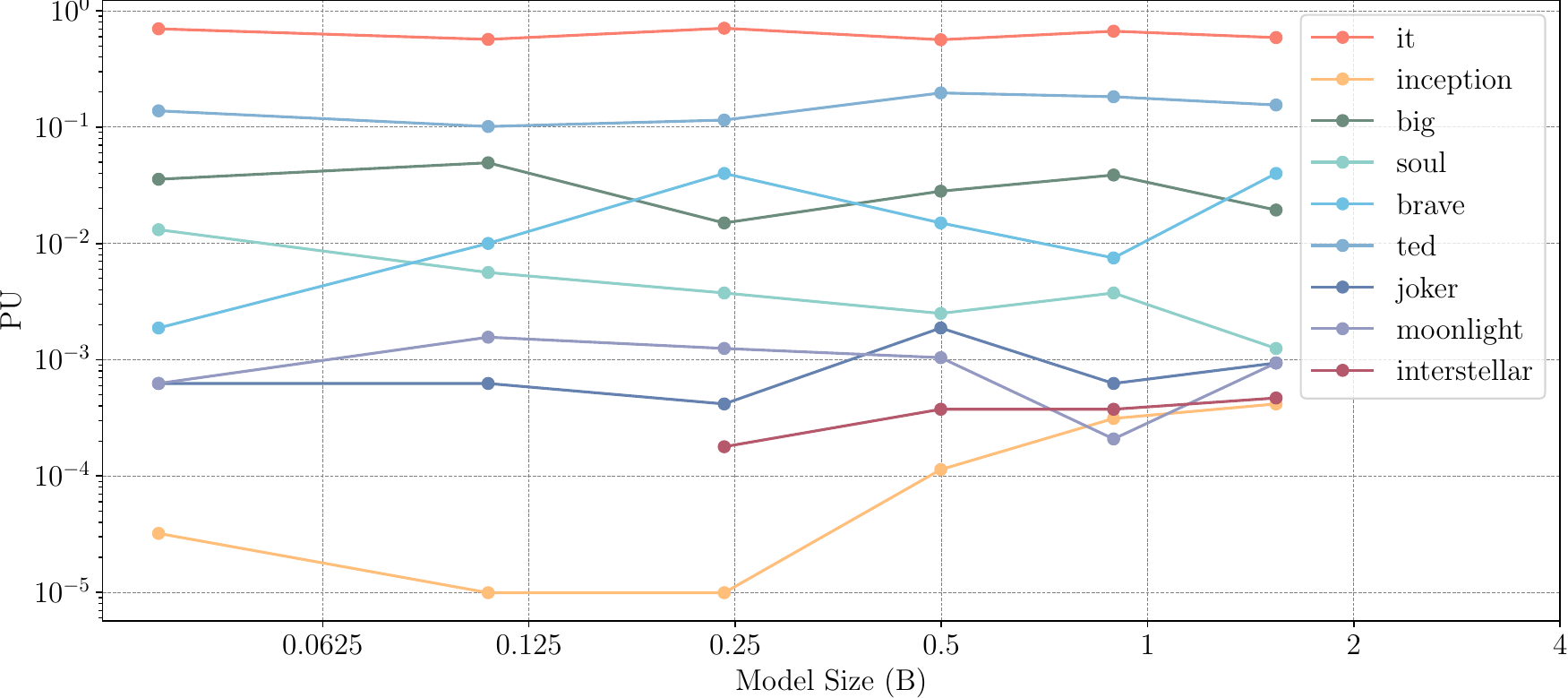}
    \caption{Large and small models have similar \textsc{PU} on these instances (mainly due to randomly sample from the vocabulary space), which creates distracting factors in our experiments.}
    \label{fig:emoji_common_words}
\end{figure}

\section{Additional Experimental Results}
In this section, we display some additional experimental results, including the additional fit curve of dataset level of \textsc{PassUntil}, and the methods of utilizing test loss to assist the instance-level \textsc{PassUntil} estimates.

\subsection{Additional Dataset Level \textsc{PassUntil} result.}
The performances of series 2 models on HumanEval are represented in Figure~\ref{fig:passrate_vs_modelsize-code2}. This prediction is less accurate compared to series 1. However, with instance level \textsc{PassUntil}, the prediction precision improves. 
\begin{figure}[!htbp]
        \centering
        \includegraphics[width=0.7\linewidth]{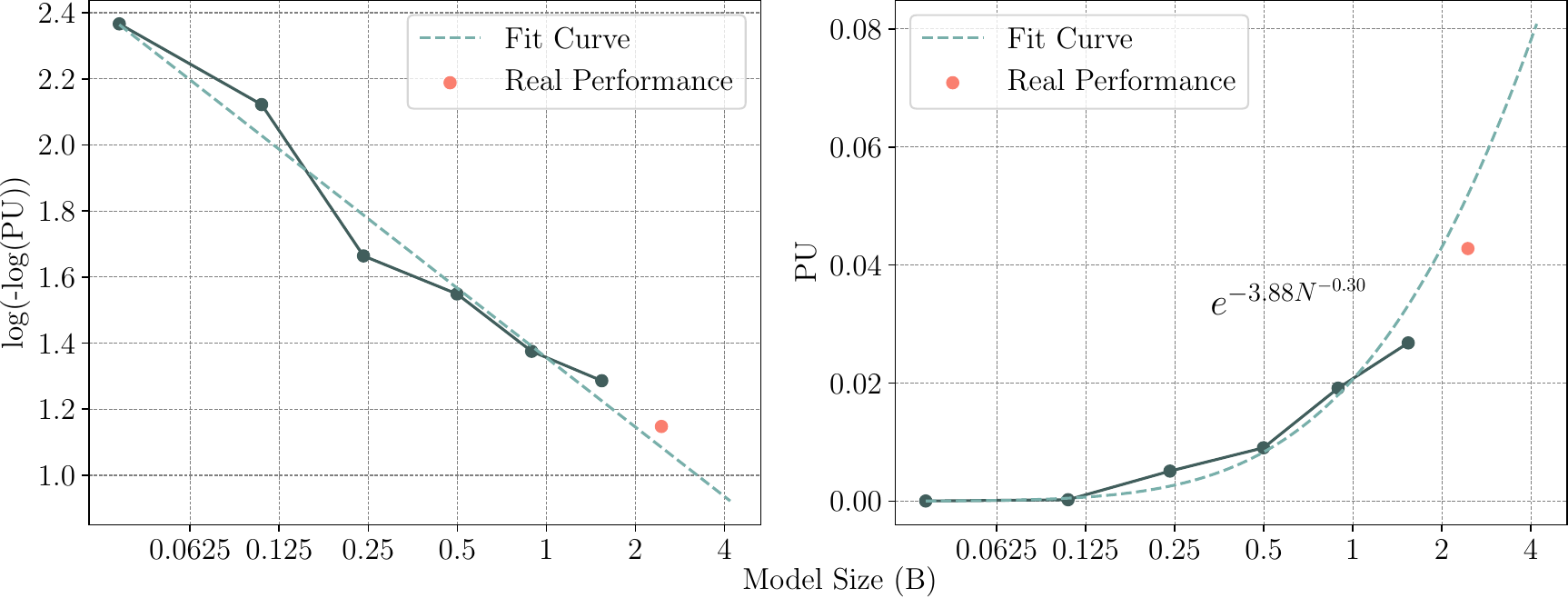}
    \caption{Additional figure on Test Loss Assitted \textsc{PassUntil} Estimate.}
    \label{fig:passrate_vs_modelsize-code2}
\end{figure}

\clearpage

\subsection{Estimating \textsc{PassUntil}  from Test Loss}
\label{app:test_loss_assist}
\hyperlink{back:d_2}{\faHandPointerO}
As shown in Figure \ref{fig:loss_vs_passuntil}, we propose leveraging test loss on ground
truth answers to assist the prediction for ``hard samples". For model series 1 and HumanEval task, the linear relationship is found to be $\textsc{PU} \sim 0.22 L$. For model series 2 and HumanEval task, the linear relationship is found to be $\textsc{PU} \sim 0.23 L$. For model series 2 and Date Understanding task, the linear relationship is found to be $\textsc{PU} \sim 0.96 L$. And for model series 2 and Emoji Movie task, the linear relationship is found to be $\textsc{PU} \sim 0.43 L$.

\begin{figure}[!htbp]
        \centering
        \includegraphics[width=0.65\linewidth]{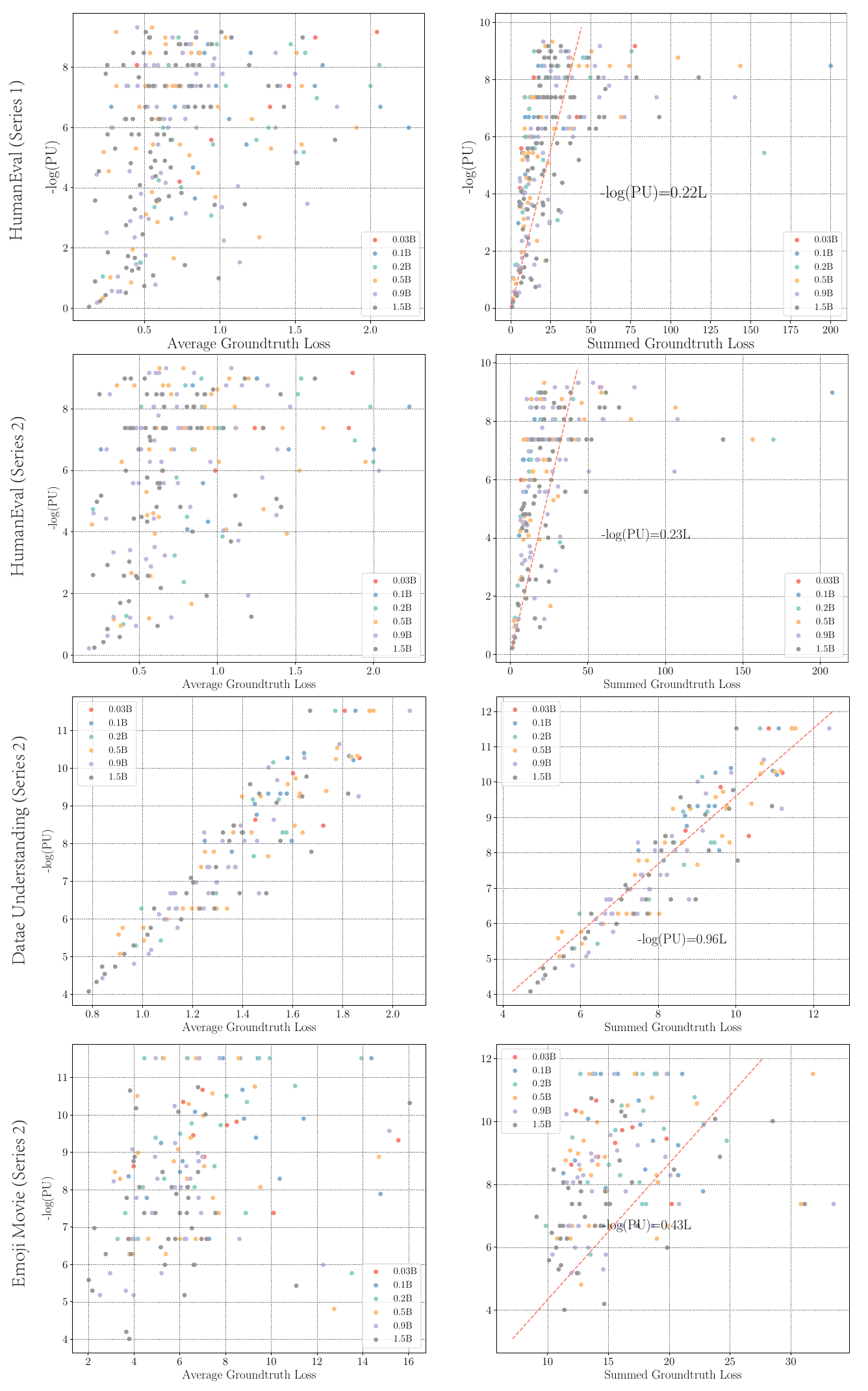}
    \caption{Additional figure on the relation between test loss and \textsc{PassUntil}.}
    \label{fig:loss_vs_passuntil}
\end{figure}


\subsection{More Results of the Unnatural In-context Learning Tasks}
\label{app:unnaturalincontextscalingcurve}
\hyperlink{back:d_3}{\faHandPointerO}
\cmt{In Figure \ref{fig:appendix_unnatural}, we present the scaling curves for the remaining fix sub-tasks of the Unnatural In-context Learning tasks. Notably, the curves in (a), (b), and (c) demonstrate a concave pattern, correlating $\log(\log(-F(N))$ with $\log N$. Specifically, the 2-digits task displays an interesting inverse scaling trend, indicating  further investigation to delineate a clearer trend.}

\cmt{
Regarding tasks in (d) and (e), we observed that these tasks pose significant challenges for smaller models. Specifically, models with 0.03B and 0.1B parameters failed to achieve non-zero pass rates, rendering the fit analysis less meaningful. Additionally, for the Reverse to Natural Content task, there's a discernible, albeit slight, sub-scaling law growth trend. This trend may be attributed to the multi-step nature inherent in this task. 
}

\begin{figure}[!htbp]
        \centering
        \includegraphics[width=0.9\linewidth]{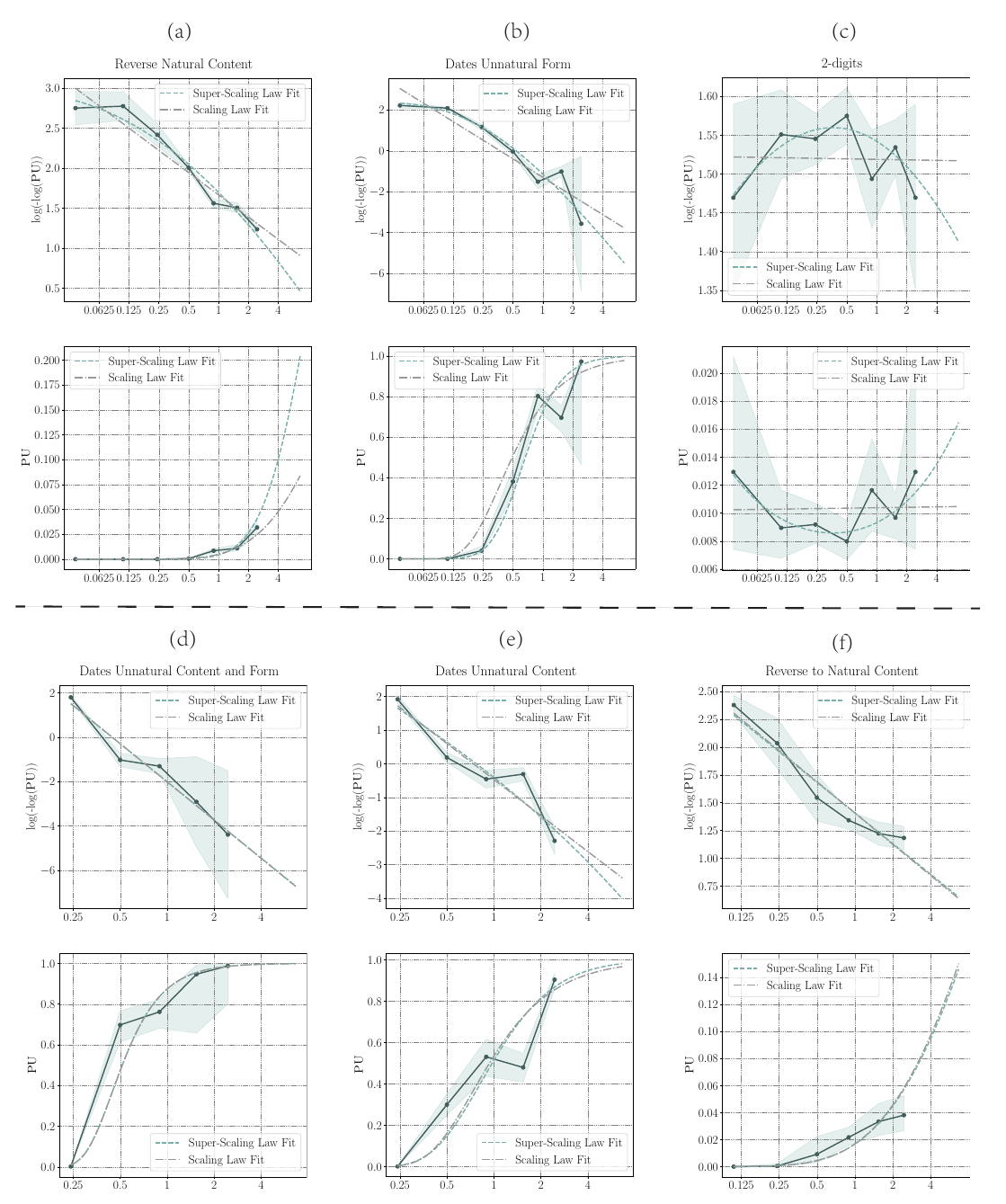}
    \caption{\cmt{Additional figure on unnatural in-context learning. The grey line shows the scaling law fit, while the green line shows the super-scaling law fit.}}
    \label{fig:appendix_unnatural}
\end{figure}

\clearpage
\subsection{Result of Individual PassUntil on More Samples }
\label{app:otheripu_sample}
\hyperlink{back:d_4}{\faHandPointerO}
Figure \ref{fig:idp_with_modelsize_part1} shows more instances of individual \textsc{PassUntil} scaling curves of model series 1 on Humaneval task.
\begin{figure}[htbp]
\centering
    \begin{minipage}[b]{0.84\textwidth}
    \centering
    \includegraphics[width=\textwidth]{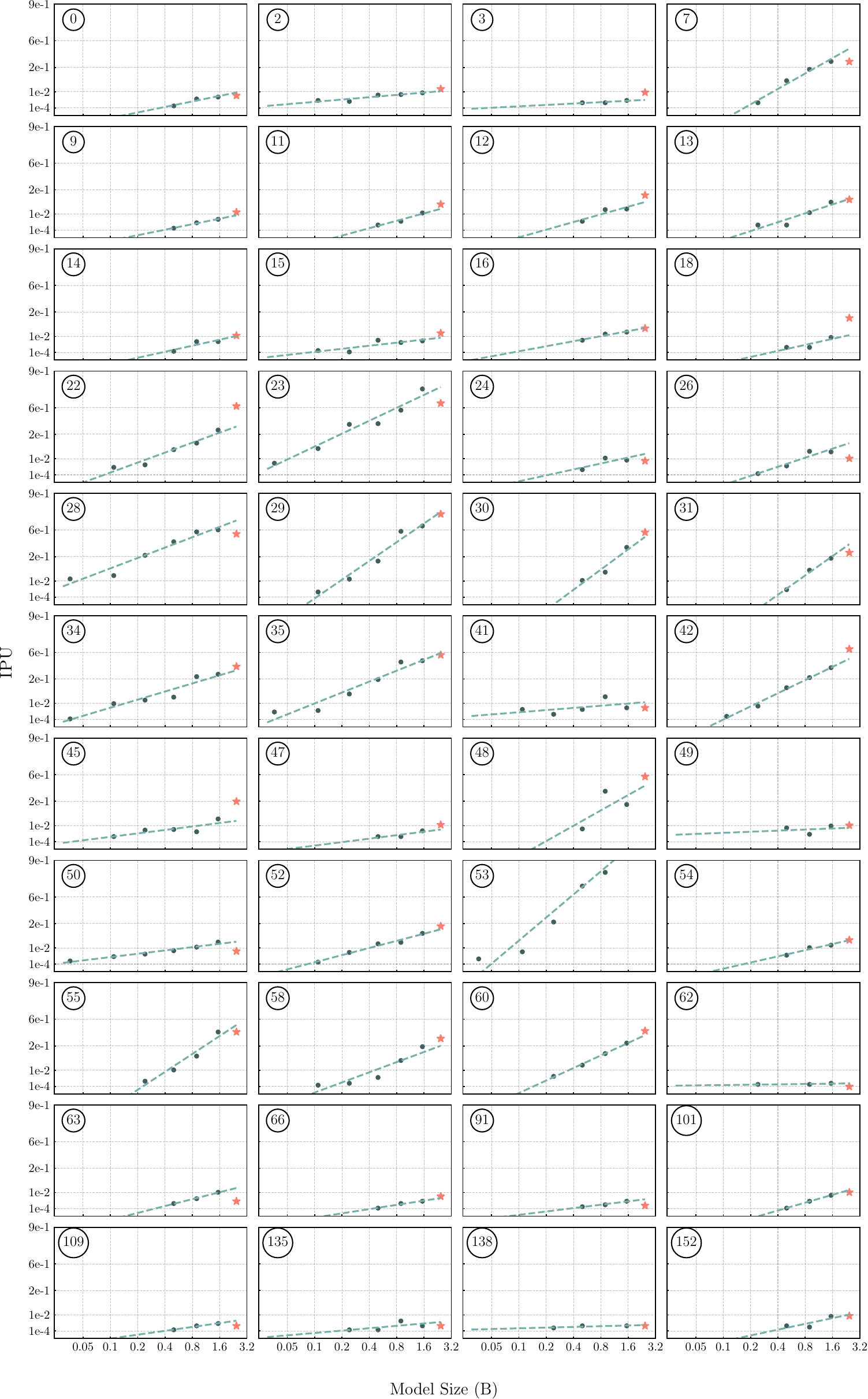}
    \vspace{-0.5cm}
    \caption{Result of instance-level scaling law fit. The label on the left upper corner of each subplot denotes the index of the sample in the test set\protect\footnotemark. }
    \label{fig:idp_with_modelsize_part1}
\end{minipage}
\hfill
\end{figure}
\footnotetext{\url{https://github.com/openai/human-eval/tree/master/data}}

\end{document}